\newcommand*\circled[1]{\tikz[baseline=(char.base)]{
    \node[shape=circle,draw,inner sep=0.5pt] (char) {{\small #1}};}}
\theoremstyle{definition}
\newtheorem{definition}{Definition}
\definecolor{colorfor1}{HTML}{6da4f3}
\definecolor{colorfor2}{HTML}{8ec18b}
\definecolor{colorfor3}{HTML}{ea6e68}
\definecolor{significant_001}{HTML}{68c79c}
\definecolor{significant_01}{HTML}{4f9a90}
\definecolor{significant_05}{HTML}{fa8c63}
\newcommand{\sigsss}[1]{\underline{#1}}
\newcommand{\sigss}[1]{{#1}}
\newcommand{\sigs}[1]{{#1}}
\renewcommand{\b}[1]{\boldsymbol{#1}}
\newcommand{\specialcell}[2][c]{%
  \begin{tabular}[#1]{@{}c@{}}#2\end{tabular}}
\newtheorem{theorem}{Theorem}
\theoremstyle{remark}
\title{\LARGE \bf Topology-Driven Parallel Trajectory Optimization in Dynamic Environments}
\author{Oscar de Groot, Laura Ferranti, Dariu M. Gavrila, Javier Alonso-Mora
\thanks{The authors are with the Dept. of Cognitive Robotics, TU Delft, 2628 CD Delft, The Netherlands. \texttt{Email: o.m.degroot@tudelft.nl}}%
\thanks{This work received support from the Dutch Science Foundation NWO-TTW within the Veni project HARMONIA (18165), and the European Union within the ERC Starting Grant INTERACT (101041863) and the EVENTS project (101069614). Views and opinions expressed are however those of the author(s) only and do not necessarily reflect those of the European Union or European Commission. Neither the European Union nor the granting authority can be held responsible for them.}}
\begin{document}

\maketitle
\thispagestyle{empty}
\pagestyle{empty}

\begin{abstract}
Ground robots navigating in complex, dynamic environments must compute collision-free trajectories to avoid obstacles safely and efficiently. Nonconvex optimization is a popular method to compute a trajectory in real-time. However, these methods often converge to locally optimal solutions and frequently switch between different local minima, leading to inefficient and unsafe robot motion. In this work, we propose a novel topology-driven trajectory optimization strategy for dynamic environments that plans multiple distinct evasive trajectories to enhance the robot's behavior and efficiency. A global planner iteratively generates trajectories in distinct homotopy classes. These trajectories are then optimized by local planners working in parallel. While each planner shares the same navigation objectives, they are locally constrained to a specific homotopy class, meaning each local planner attempts a different evasive maneuver. The robot then executes the feasible trajectory with the lowest cost in a receding horizon manner. We demonstrate, on a mobile robot navigating among pedestrians, that our approach leads to faster trajectories than existing planners.
\end{abstract}
\begin{IEEEkeywords}
    Motion and Path Planning, Optimization and Optimal Control, Collision Avoidance, Constrained Motion Planning
\end{IEEEkeywords}

\section{INTRODUCTION} \label{sec:introduction}
\IEEEPARstart{M}{obile} robots are being deployed in increasingly more complex environments, for example, to automate logistics in warehouses~\cite{simon_inside_2019}
or mobility through self-driving cars \cite{walker_self-driving_2019}. However, it remains challenging to safely and efficiently navigate complex dynamic environments.

In dynamic environments, a robot must make both high-level and low-level decisions. High-level decisions involve, for example, choosing the general direction for safely avoiding obstacles (e.g., going left or right). Low-level decisions involve, for example, determining the exact shape of a trajectory that is both collision-free and dynamically feasible. While these decisions operate on separate levels of the planning problem, they are often not differentiated, which can degrade planner performance in terms of time efficiency and safety. 
Existing methods make the high-level decision implicitly~\cite{brito_model_2019,zhu_chance-constrained_2019,everett_motion_2018}, do not distinguish the high-level and low-level decisions \cite{werling_optimal_2010}$-$\cite{karaman_sampling-based_2011}, only consider static obstacles in the high-level decision~\cite{rosmann_integrated_2017} or require a structured environment to make the high-level decision~\cite{ziegler_making_2014,kunz_autonomous_2015, altche_partitioning_2017}. %
We propose a trajectory optimization algorithm that accounts for these two levels of the planning problem explicitly.

Widely used \textit{optimization-based local planners}, such as Model Predictive Control (MPC)~\cite{brito_model_2019}, formulate the motion planning problem as an optimization problem that can efficiently compute trajectories satisfying dynamic and collision avoidance constraints. Optimization-based planners make high-level decisions implicitly through the initialization of the optimization and tuning of the cost function. 
The planner explores only a small set of possible motion plans near the initial guess. An inadequate initial guess may lead to a poor (i.e., slow or non-smooth) trajectory, slow convergence, or infeasibility. When the initial guess is not consistent over multiple planner cycles, the planner can repeatedly switch its high-level decision, leading to indecisive behavior.

\begin{figure}[t]
    \centering
    \includegraphics*[width=0.4\textwidth]{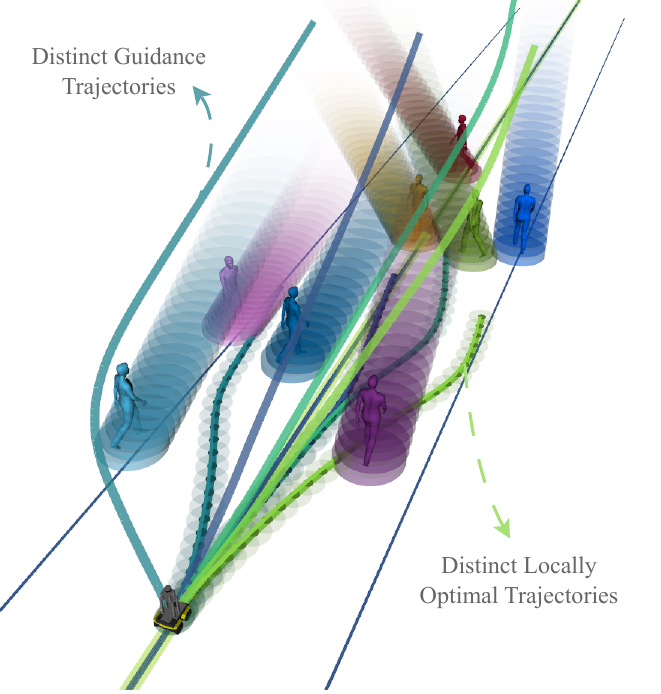}
    \caption{T-MPC first computes distinct guidance trajectories in the state space (time is visualized in the upwards direction). Each guidance trajectory initializes a local planner, resulting in several distinct locally optimized trajectories. The locally optimized trajectories each pass the obstacles (predicted future motion visualized as cylinders) in a distinct way.}
    \label{fig:eye-catcher}
\end{figure}%

Alternatively, \textit{global planners}, such as Randomly exploring Random Trees (RRT*)~\cite{karaman_sampling-based_2011} and motion primitives~\cite{werling_optimal_2010}, generate many feasible trajectories, evaluating safety and performance for each. Because they do not distinguish between high-level and low-level decisions, many redundant poor trajectories may be generated, leading to poor-quality motion plans under strict computational limits. This issue is especially prevalent in highly dynamic environments where trajectories need to be computed fast.

In this work, we present a planning framework, which we refer to as Topology-driven Model Predictive Control (T-MPC) (see Fig.~\ref{fig:eye-catcher}), that leverages the strengths of optimization-based planners and global planners. 

We present a global planner that identifies several distinct, high-level navigation options by considering the topology of the dynamic collision-free space. The underlying topology allows us to distinguish between similar and distinct trajectories. We then use each high-level trajectory as initialization for an optimization-based planner. The low-level planning problems are independent and are solved in parallel. Our framework does not modify the cost function of the optimization-based planner and can select the executed trajectory by comparing their optimal costs. 

\section{RELATED WORK AND CONTRIBUTION} \label{sec:related_work} %
Motion planning methods can be divided into local and global planning methods and combinations of these methods.
\subsubsection{Local Planners}
\textit{Local planners} such as Model Predictive Control (MPC)~\cite{brito_model_2019,ferranti_safevru_2019} typically formulate trajectory planning as a nonlinear optimization problem where performance (e.g., progress and smoothness) is optimized under constraints (e.g., dynamic constraints and collision avoidance). MPC can plan time efficient and smooth trajectories and handles various constraints, for example, to account for uncertainty in human behavior~\cite{zhu_chance-constrained_2019,wang_fast_2020,de_groot_scenario-based_2021, de_groot_scenario-based_2023}. However, because the collision-free space is nonconvex (obstacle regions are excluded), the optimized trajectory is locally optimal, and there may therefore exist a lower-cost motion plan than the returned solution. This may occasionally result in poor (e.g., slow, non-smooth) trajectories and can prevent MPC from returning a feasible trajectory in time.

To mitigate infeasibility with MPC, some authors propose to use two trajectories where one features as a contingency plan~\cite{pek_fail-safe_2021,alsterda_contingency_2019} improving planner safety. The planner may still perform poorly when the contingency plan is activated. Alternatively, robustness can be improved by running several optimizations in parallel. For example, in~\cite{adajania_multi-modal_2022}, an MPC is parallelized over goal locations, but requires a structured environment and a specific cost function and constraint set.

\subsubsection{Global Planners}
In contrast with local planners, \textit{global planners} do not rely on nonconvex optimization and therefore do not get trapped in local optima. Sampling-based global planners such as RRT*~\cite{karaman_sampling-based_2011} and Probabilistic Roadmaps (PRM)~\cite{kavraki_probabilistic_1996} plan by randomly sampling and connecting states in the configuration space until a goal configuration is reached. 
These methods typically consider static obstacles. In dynamic environments, RRT$^{\textrm{x}}$~\cite{otte_rrtx_2016} continuously rewires the graph. Recent work~\cite{orthey_multilevel_2024} greatly improved the computational efficiency of sampling-based planners for high-dimensional problems by using topological abstraction over fiber bundles. Unfortunately, these methods remain computationally demanding when dynamic constraints and collision avoidance are imposed on the problem and may return non-smooth trajectories. 

Motion primitive planners (e.g.,~\cite{werling_optimal_2010,stahl_multilayer_2019}) instead generate a large number of trajectories that are dynamically feasible by construction. The best trajectory is identified by scoring each trajectory. Motion primitives planners efficiently compute smooth trajectories, but discretize the possible maneuvers which can lead to infeasibility and inefficient robot motion. For static environments,~\cite{ortiz-haro_idb-_2023} presents a smooth global planner that repairs dynamic mismatches between global plans through trajectory optimization. It is however computationally demanding. Similarly, PiP-X~\cite{m_jaffar_pip-x_2023} combines graph-search with funnels to find robust dynamically feasible paths but may return inefficient trajectories. In~\cite{marcucci_motion_2022}, the motion planning problem with collision avoidance is solved via a convex optimization by utilizing Graphs of Convex Sets (GCS). This approach is promising, but is not real-time yet and imposes limitations on the trajectory end point, supported dynamics and constraints.

\subsubsection{Guidance Planners}
Local planners typically receive an initial trajectory or reference path from a global planner. This global planner takes into account static obstacles and the overall route to the goal, which helps prevent the local planner from encountering deadlocks~\cite{ziegler_making_2014,zheng_ros_2021}. The performance can be further enhanced by incorporating dynamic obstacles into the global planner. We refer to global planners that consider dynamic obstacles as `guidance planners'. 
For example, for self-driving vehicle applications, \cite{eiras_two-stage_2022} initializes an MPC in the desired behavior with a simplified Mixed-Integer Linear Program (MILP). In~\cite{ding_epsilon_2022}, a behavior planner based on a Partially-Observable Markov Decision Process (POMDP) guides a local motion planner in interactive scenarios for a self-driving vehicle. Both methods rely on a structured environment.%

To compute a suitable initial guess for a local planner considering obstacles, several authors~\cite{park_homotopy-based_2015, bhattacharya_topological_2012, rosmann_integrated_2017, altche_partitioning_2017, yi_model_2019,de_groot_globally_2023,zhou_robust_2020, kretzschmar_socially_2016, mavrogiannis_winding_2023} have noted that local optima related to collision avoidance link to the topology of trajectories through the collision-free space. Roughly speaking, two trajectories are in the same \textit{homotopy} class if they can be smoothly transformed into each other in the collision-free space~\cite{bhattacharya_topological_2012} (e.g., when they evade the obstacles on the same side). Unfortunately, homotopy classes of trajectories are difficult to compute in general. If the environment can be consistently represented as a graph, then homotopy classes of trajectories can be identified through distinct paths over the graph~\cite{park_homotopy-based_2015}. This applies, for example, in structured autonomous driving applications through the lane structure of the road network~\mbox{\cite{altche_partitioning_2017, yi_model_2019}}.

Without structure in the environment, it is difficult to compute a trajectory in each homotopy class. 
Graphs can be constructed from static obstacles. In~\cite{cao_dynamic_2019}, Delauney triangulation is used to identify passable gaps between dynamic obstacles in a global planner. Voronoi graphs are used in~\cite{kretzschmar_socially_2016} to identify homotopy classes with respect to static obstacles and in~\cite{rosmann_integrated_2017} include each dynamic obstacle and their predicted motion as a static obstacle. 
Trajectories are generated from the homotopy class description in~\cite{mavrogiannis_hamiltonian_2021} by modeling interactions as a physical vortex system. These graph-based and generative approaches can exhaust the possible homotopy classes, but scale poorly to crowded environments (as noted in~\cite{rosmann_integrated_2017} and~\cite{mavrogiannis_hamiltonian_2021}).

Instead, several works, such as~\cite{zhou_robust_2020, bhattacharya_topological_2012,rosmann_integrated_2017}, compute distinct trajectories by filtering out homotopy equivalent trajectories during planning. For \mbox{$3$-D} navigation among static obstacles,~\cite{zhou_robust_2020} introduces Universal Visibility Deformation (UVD) to compare trajectories. Trajectories are UVD equivalent if they can be connected without collision at several intermediate times. The authors present a visibility-PRM~\cite{simeon_visibility-based_2000} to generate UVD-distinct trajectories. 
In $2$D dynamic environments, homotopy classes are typically compared via winding numbers~\cite{berger_topological_2001} or the H-signature~\cite{bhattacharya_topological_2012}. \textit{Winding numbers} track the relative angle between the robot and dynamic obstacles over their trajectories. They were used in~\cite{kretzschmar_socially_2016} to distinguish homotopy classes of trajectories with respect to dynamic obstacles. In~\cite{mavrogiannis_winding_2023}, an MPC with winding numbers in the cost function was proposed to motivate passing progress. The \textit{H-signature} uses \textit{homology} classes as an approximation for homotopy classes. The work in~\cite{rosmann_integrated_2017}, which relates most closely to this work, applied this approximation for $2$-D navigation among static obstacles. Their planner, Time Elastic Band (TEB) Local Planner, identifies several trajectories in distinct homology classes (using regular PRM) and uses each to initialize a soft-constrained optimization-based planner. TEB has, however, three main limitations that can hinder its performance in dynamic environments. Firstly, the trajectory topology is confined to the static workspace, treating dynamic obstacles and their future motion as static obstacles. Secondly, the guidance planner is designed to reach a single goal. Lastly, the local planner lacks hard constraints.\\ 

In this work, we introduce a topology-guided planner that is different from these existing works in four ways. First, we consider homotopy classes in the \textit{dynamic} collision-free space, that includes time, to incorporate the motion of dynamic obstacles (contrary to~\cite{rosmann_integrated_2017,zhou_robust_2020}). Second, our framework does not modify the cost function (i.e., the performance criteria) of the local planner (contrary to~\cite{mavrogiannis_winding_2023,adajania_multi-modal_2022,marcucci_motion_2022}). Third, our planner does not rely on a structured environment (contrary to \cite{yi_model_2019,altche_partitioning_2017,ding_epsilon_2022}). Finally, our guidance planner can handle the case where its goal is blocked (contrary to~\cite{zhou_robust_2020,rosmann_integrated_2017}) by considering multiple goal positions. 

In addition, we \textit{enforce} the final trajectories to be in distinct homotopy classes using constraints in the local planner and we show that it is not sufficient to initialize the solver in a homotopy class (contrary to~\cite{rosmann_integrated_2017}). By consistently planning distinct trajectories, we can reidentify trajectories of prior planning iterations and use this information to make the planner more consistent and decisive. Our method furthermore supports the H-signature, winding numbers and UVD for comparing homotopy classes.

\subsection{Contribution}
In summary, our topology-driven parallel planning framework, T-MPC, contributes to the state of the art as follows:
\begin{enumerate}
    \item A planning framework for dynamic environments that optimizes trajectories in multiple distinct homotopy classes in parallel. Our framework extends existing optimization-based local planners, improving their time efficiency, safety and consistency.
    \item A fast guidance planner that computes homotopy distinct trajectories through the dynamic collision-free space towards multiple goal positions.%
\end{enumerate}
We validate our proposed framework in simulation on a mobile robot navigating among interactive pedestrians. 
We show how our framework can accommodate different trajectory optimization approaches (e.g.,~\cite{brito_model_2019} in the nominal case, and~\cite{zhu_chance-constrained_2019} to accommodate Gaussian uncertainties added to the motion of the dynamic obstacles). We show how our framework enhances the performance of~\cite{brito_model_2019,zhu_chance-constrained_2019} out of the box and we compare against three additional baselines (\cite{werling_optimal_2010,rosmann_integrated_2017,de_groot_globally_2023}). We finally demonstrate our planner in the real world on a mobile robot navigating among five pedestrians. Our C\texttt{++}/ROS implementation of T-MPC will be released open source.

This work is an extension of our earlier conference publication~\cite{de_groot_globally_2023}. In~\cite{de_groot_globally_2023}, we computed a single guidance trajectory and followed it with a local planner by adding a tracking term. Compared to~\cite{de_groot_globally_2023}, we compute and optimize multiple distinct guidance trajectories in parallel. In addition, we derive constraints from the guidance trajectory such that the cost of the optimization is unmodified and can be used to compare optimized trajectories. Finally, we improved the robustness and consistency of the guidance planner and extended the experimental evaluation. 

The rest of this work is organized as follows. We introduce the planning problem in Sec.~\ref{sec:problem_formulation}. The planning framework is described and analyzed in Sec.~\ref{sec:method}. Simulation and real-world results are presented in Sec.~\ref{sec:simulation} and Sec.~\ref{sec:experiments}, respectively, followed by a discussion in Sec.~\ref{sec:discussion}.

\section{PROBLEM FORMULATION}\label{sec:problem_formulation}
We consider discrete-time nonlinear robot dynamics
\begin{equation}
    \b{x}_{k + 1} = f(\b{x}_k, \b{u}_k),
\end{equation}
where $\b{x}_k\in\mathbb{R}^{n_x}$ and $\b{u}_k\in\mathbb{R}^{n_u}$ are the state and input at discrete time instance $k$, $n_x$ and $n_u$ are the state and input dimensions respectively and the state contains the 2-D position of the robot $\b{p}_k = (x_k, y_k)\in\mathbb{R}^2 \subseteq{\mathbb{R}^{n_x}}$. 

The robot must avoid moving obstacles in the environment. The position of obstacle $j$ at time $k=0$ is denoted $\b{o}^j_0\in\mathbb{R}^2$ and we assume that for each obstacle, predictions of its positions over the next $N$ time steps are provided to the planner (i.e., $\b{o}^j_1, \hdots, \b{o}^j_{N}$) at each time instance. The collision region of the robot is modeled by a disc of radius $r$ and that of each obstacle $j$ by a disc with radius $r^j$ (see Fig.~\ref{fig:homotopy2d}).

For high-level planning with dynamic collision avoidance, we consider the simplified state space  
$\mathcal{X} \coloneqq \mathbb{R}^2 \times [0, T]$, with $[0, T]$ a continuous finite time domain (see Fig.~\ref{fig:homotopy3d}). The area of the workspace occupied by the union of obstacles at time $t$ is denoted by $\mathcal{O}_t \subset \mathbb{R}^2$ and the obstacle set in the state space is thus $\mathcal{O} \coloneqq \bigcup_{\forall t \in [0, T]}(\mathcal{O}_t, t) \subset \mathcal{X}$. The collision free state space (or \textit{free space}) is denoted $\mathcal{C} \coloneqq \mathcal{X} \textrm{\textbackslash} \mathcal{O}$. A trajectory is a continuous path through the state space, $\b{\tau} : [0, 1] \to \mathcal{X}$. The goal of the robot is to traverse along a given reference path $\b{\gamma} : [0, 1] \to \mathbb{R}^2$ without colliding with the obstacles while tracking a reference velocity $v_{\textrm{ref}}$. It is allowed to deviate from the path.

\begin{figure}[t]
    \centering
    \begin{subfigure}[t]{0.14\textwidth}
        \centering
        \includegraphics[width=\textwidth]{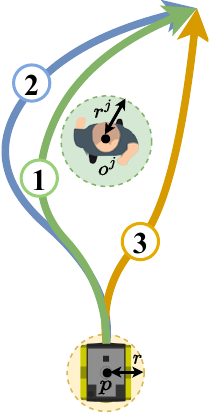}%
        \caption{Planning scene.}%
        \label{fig:homotopy2d}
    \end{subfigure}
    \hspace{0.05\textwidth}%
    \begin{subfigure}[t]{0.22\textwidth}
        \centering
        \includegraphics[width=\textwidth]{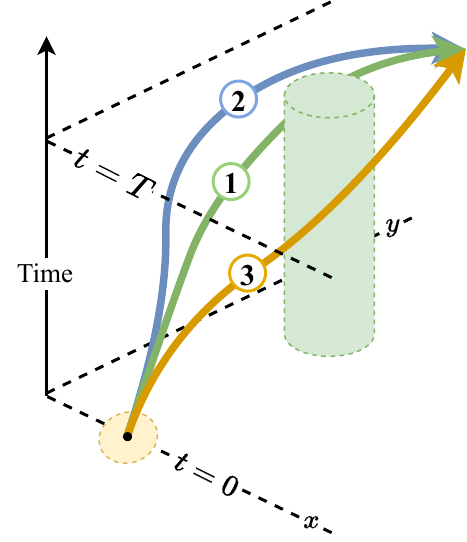}%
        \caption{State space view.}%
        \label{fig:homotopy3d}
    \end{subfigure}
    \caption{(a) Depiction of the planning problem and (b) equivalent in the state-space. Trajectory $1$ and $2$ are in the same homotopy class while trajectory $1$ and $3$ are in distinct homotopy classes.}
    \label{fig:homotopy}%
\end{figure}
\begin{figure*}
    \centering
    \includegraphics[width=\textwidth,trim={0cm 0cm 2cm 0cm},clip]{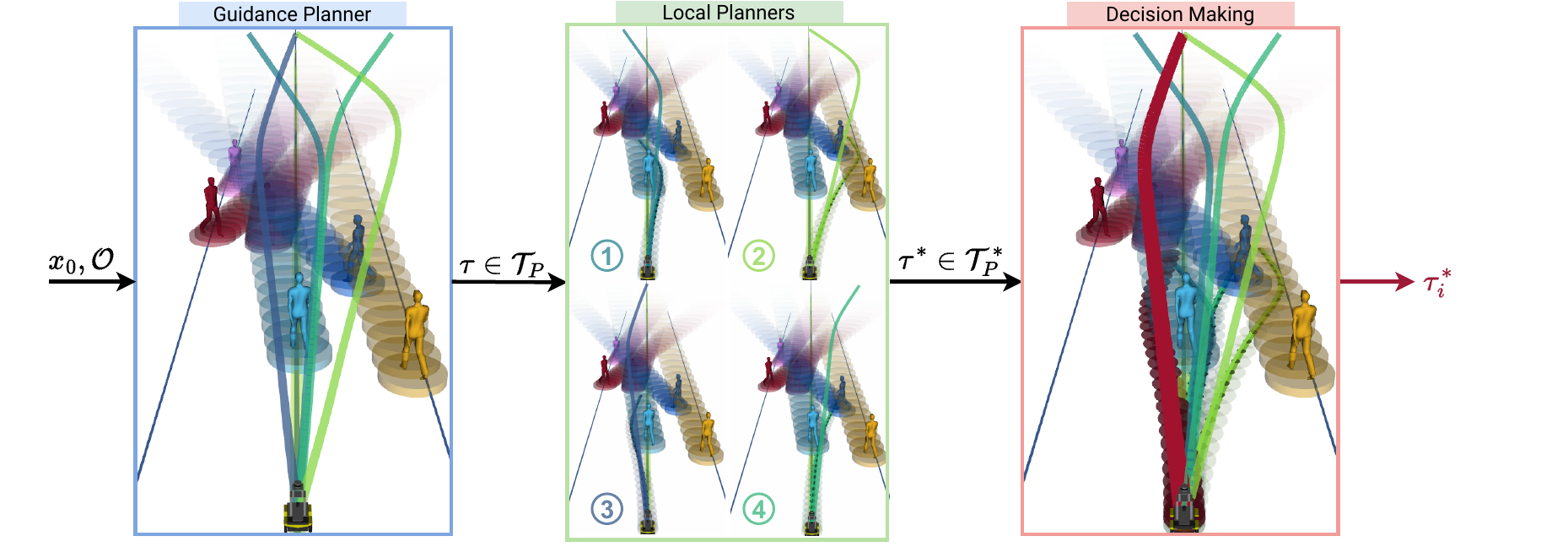}
    \caption{Schematic of T-MPC. An environment with several obstacles and a robot is visualized in $x, y, t$ (time in the upwards axis). Obstacle motion predictions are denoted with cylinders. {{\color{colorfor1} \textbf{(1)}}} A guidance planner (Sec.~\ref{sec:global_planner}) finds $P=4$ trajectories (visualized with colored lines) from the robot initial position to one of the goals. Each of these trajectories is in a distinct homotopy class in the state space. {{\color{colorfor2} \textbf{(2)}}} Each trajectory guides a local planner (Sec.~\ref{sec:local_planner}) as initial guess and through a set of homotopy constraints. Four guidance trajectories and optimized trajectories (as occupied regions for each step) are visualized. {{\color{colorfor3} \textbf{(3)}}} The optimized trajectories are compared through their objective value (Sec.~\ref{sec:decision_making}) and a single trajectory (in red) is excuted by the robot.}
    \label{fig:method-schematic}%
\end{figure*}

\subsection{Optimization Problem}
We formalize the planning problem as the following trajectory optimization problem over a horizon of $N$ steps
\begin{subequations}
    \label{eq:local_optimization}
\begin{align}
    \min_{\b{u}\in \mathbb{U}, \b{x}\in\mathbb{X}} \quad & \sum_{k = 0}^{N} J(\b{x}_k, \b{u}_k) \label{eq:local_objective}\\
    \textrm{s.t.} \quad \quad & \b{x}_{k + 1} = f(\b{x}_k, \b{u}_k), \: \forall k \label{eq:local_dynamics}\\
    &\b{x}_{0} = \b{x}_{\textrm{init}}\label{eq:init}\\
    &g(\b{x}_k, \b{o}^j_k) \leq 0, \: \forall k, j,\label{eq:local_constraints}
\end{align}
\end{subequations}
where the cost function $J$ in \eqref{eq:local_objective} expresses the planning objectives (e.g., following reference path $\b{\gamma}$). Robot dynamics and initial conditions are imposed by \eqref{eq:local_dynamics} and \eqref{eq:init}, respectively and collision avoidance constraints are imposed by \eqref{eq:local_constraints}. %

Because dynamic obstacles puncture holes in the free space, the free space associated with the constraints~\eqref{eq:local_constraints} is nonconvex. %
Nonlinear optimization algorithms, solving this problem, return just one of possibly many local optimal trajectories. The initial guess provided to them determines which local optimal trajectory is returned.
It is generally unclear how close this trajectory is to the globally optimal trajectory (i.e., the best trajectory under the specified cost). In this work, we want to leverage this weakness to explore in parallel multiple locally optimal trajectories (provided as initial guesses on $\b{x}$) that evade obstacles in a distinct way.

\subsection{Homotopic Trajectories}\label{sec:preliminary}

To achieve the goal above, we rely on the concept of homotopic trajectories, which can be formalized as follows:
\begin{definition}\label{def:homotopic_trajectories}
    \cite{bhattacharya_topological_2012}~(Homotopic Trajectories) Two paths connecting the same start and end points $\b{x}_s$ and $\b{x}_g$ respectively, are homotopic if they can be continuously deformed into each other without intersecting any obstacle. Formally, if $\b{\tau}_1, \b{\tau}_2 \in T$ represent two trajectories, with $\b{\tau}_1(0) = \b{\tau}_2(0) = \b{x}_s$ and $\b{\tau}_1(1) = \b{\tau}_2(1) = \b{x}_g$, then $\b{\tau}_1$ is homotopic to $\b{\tau}_2$ iff there exists a continuous map $\eta : [0, 1] \times [0, 1] \to \mathcal{C}$ such that $\eta(\alpha, 0) = \tau_1(\alpha) \forall \alpha \in [0, 1], \ \eta(\beta, 1) = \tau_2(\beta), \forall \beta \in [0, 1]$ and $\eta(0, \gamma) = \b{x}_s, \ \eta(1, \gamma) = \b{x}_{g} \forall \gamma \in [0, 1]$.
\end{definition}
If two trajectories are homotopic, they are said to be in the same homotopy class. An example is depicted in Fig.~\ref{fig:homotopy}. To distinguish between trajectories in different homotopy classes, we make use of the homotopy comparison function 
\begin{equation}\label{eq:homotopy_comparison}
    \mathcal{H}(\b{\tau}_i, \b{\tau}_j,\mathcal{O})\!=\! \begin{cases}
        1, & \b{\tau}_i, \b{\tau}_j \textrm{ in the same homotopy class}\\
        0, & \textrm{otherwise}
    \end{cases}%
\end{equation}
Verifying whether two trajectories are in the same homotopy class can be computationally inefficient. %
We support the H-signature~\cite{bhattacharya_topological_2012}, winding numbers~\cite{berger_topological_2001} and UVD~\cite{zhou_robust_2020} that allow us to approximately perform this verification in real-time. Details of the three methods are provided in Appendix~\ref{ap:homotopy_comparison}.%

\section{TOPOLOGY-DRIVEN MODEL PREDICTIVE CONTROL}\label{sec:method}
In this section, we propose T-MPC, a topology-guided planner that optimizes trajectories in multiple distinct homotopy classes in parallel.

Our planner consists of two %
components: a high-level guidance planner and multiple identical low-level local planners (see Fig.~\ref{fig:method-schematic}). The \textit{guidance planner} $G$ generates homotopy distinct trajectories through the free space%
\begin{equation}
    G(\b{x}_0, \mathcal{P}_g, \mathcal{C}) = \{\b{\tau}_1, \hdots, \b{\tau}_P\} \eqqcolon \mathcal{T}_P,\label{eq:global-planner}%
\end{equation}
where $\b{x}_0$ denotes the robot initial state and $\mathcal{P}_g$ denotes a set of goal positions. Each \textit{local planner} is initialized with one of the guidance trajectories and optimizes the trajectory in the same homotopy class. With $N$ the horizon of the guidance and local planners\footnote{The guidance planner horizon could extend beyond the horizon of the local planner. We set them equal for simplicity.}, each local planner defines a mapping $L : \mathcal{X}^{N} \to \mathcal{X}^{N}$,
\begin{equation}
    L(\b{\tau}_i) = \b{\tau}_i^*.\label{eq:local-planner}
\end{equation}%
To ensure that the local planner optimizes in the provided homotopy class, we append a set of constraints derived from the guidance trajectory. 
These constraints are appended to existing collision avoidance constraints to adapt the planner to the globalized framework. %
The proposed planner computes locally optimal trajectories $\mathcal{T}^*_P \coloneqq \{\b{\tau}^*_1, \hdots, \b{\tau}^*_P\}$ in several distinct homotopy classes.

\subsection{Guidance Planner - Overview}\label{sec:global_planner}%
The goal of the guidance planner is to quickly compute several homotopy distinct trajectories through the free space. Similarly to~\cite{zhou_robust_2020,rosmann_integrated_2017}, we perform this search via Visibility-Probabilistic RoadMaps (Visibility-PRM~\cite{simeon_visibility-based_2000}), a sampling-based global planner. The modifications that we make ensure that the graph remains consistent over successive iterations. 

\begin{figure}[t]
    \centering
    \begin{subfigure}{0.25\textwidth}
        \centering
        \includegraphics[width=\textwidth,trim={0cm 0cm 0cm 0cm},clip]{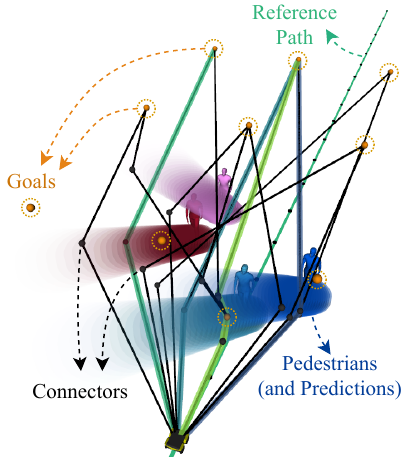}
        \caption{}%
        \label{fig:prm-graph}
    \end{subfigure}
    \begin{subfigure}{0.18\textwidth}
        \centering
        \includegraphics[width=\textwidth,trim={2cm 0cm 3cm 0cm},clip]{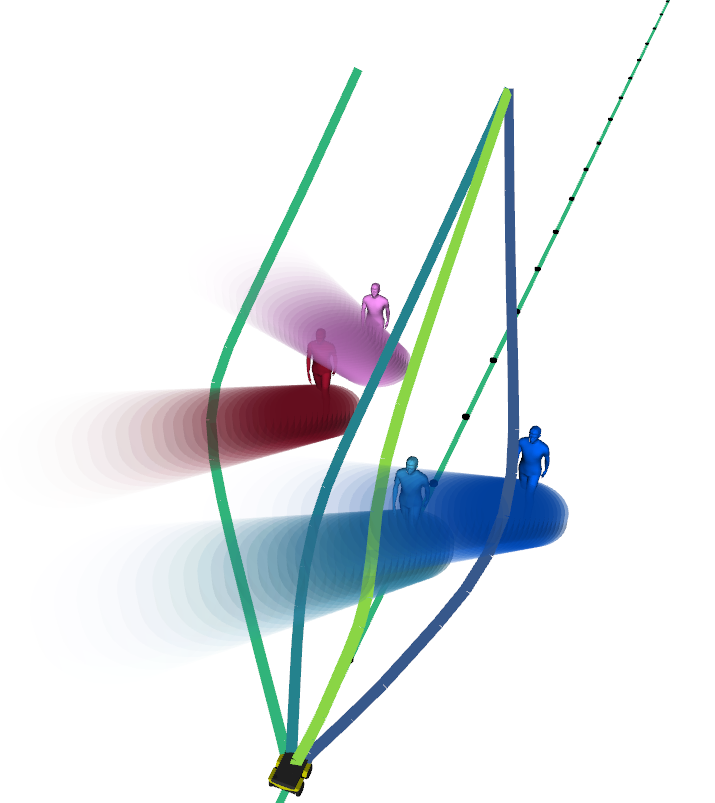}
        \caption{}%
        \label{fig:prm-trajectories}
    \end{subfigure}
    \caption{Illustration of the guidance planner in the state-space (time in the upwards axis). Visualization follows Fig.~\ref{fig:method-schematic}. (a) The visibility-PRM graph (black lines and dots) explores the free space toward the goals placed at $t=T$ around the reference path (orange dots). The homotopy distinct guidance paths (colored lines) are obtained by searching the graph. (b) The final trajectories are smoothened.}
    \label{fig:guidance-planner}%
\end{figure}%

The guidance planner is outlined in Algorithm~\ref{alg:guidance_planner} and visualized in Fig.~\ref{fig:guidance-planner}. Details of the algorithm are given in Sec.~\ref{sec:detailed_guidance_planner}. We give a high-level overview here. First, \textbf{Visibility-PRM} constructs a sparse graph through the state space from the robot position to a set of goals, where each connection is homotopy distinct (line $1$). The goals represent end points for the guidance planner and are placed along the reference path. For each goal, \textbf{DepthFirstSearch} (line $2$) searches in this graph for the shortest $P$ trajectories that reach it. Any homotopy equivalent trajectories are filtered out by \textbf{FilterAndSelect} (line $3$), ensuring that the remaining trajectories are in distinct homotopy classes. The $P$ trajectories that seem most promising are selected by a heuristic that prefers its goal to be as close as possible to the reference path at the reference velocity (see Fig.~\ref{fig:prm-graph}). 
\textbf{IdentifyAndPropagate} (line~$4$) verifies if any of the selected trajectories are equivalent to trajectories of the previous planning iteration. This reidentification makes it possible to follow the same passing behavior over multiple planning iterations. We finally propagate the nodes in the Visibility-PRM graph by lowering their time state by the planning time step.

Through this process, we obtain in each iteration $P$ piecewise linear trajectories $\mathcal{T}_P = \{\b{\tau}_1, \hdots, \b{\tau}_P\}$ that each connects the robot position to one of the goals (see Fig.~\ref{fig:method-schematic}). We finally smoothen these trajectories and fit cubic splines to make them differentiable (see Fig.~\ref{fig:prm-trajectories}). More details can be found in~\cite{de_groot_globally_2023}. The smoothening procedure produces only a small displacement in trajectories to maintain their homotopy class.

These trajectories serve as initializations for the local planners, described in Sec.~\ref{sec:local_planner}. %

\begin{algorithm}[t]
    \caption{Guidance Planner}
    \label{alg:guidance_planner}
    \KwIn{$\mathcal{C}$, $\b{x}_0$, $\b{x}_N$, previous graph $\mathcal{G}^-$, previous trajectories $\mathcal{T}_P^-$}

    $\mathcal{G} \ \leftarrow$ \textbf{Visibility-PRM}$(\mathcal{C}, \b{x}_0, \b{x}_N, \mathcal{G}^-)$\\
    $\{\b{\tau}_0, \hdots, \b{\tau}_{N_{GS}} \} \ \leftarrow$ \textbf{DepthFirstSearch}$(\mathcal{G})$\\
    $\mathcal{T}_P = \{\b{\tau}_0, \hdots, \b{\tau}_{P} \} \ \leftarrow$ \textbf{FilterAndSelect}$(\{\b{\tau}_0, \hdots, \b{\tau}_{N_{GS}} \})$\\

    $\mathcal{G}^- \ \leftarrow$ \textbf{IdentifyAndPropagate}$(\{\b{\tau}_0, \hdots, \b{\tau}_{P} \}, \mathcal{T}_P^-)$\\

    \KwOut{$\mathcal{T}_P$}
\end{algorithm}

\subsection{Guidance Planner - Detailed Description}\label{sec:detailed_guidance_planner} %
We detail each step of Algorithm~\ref{alg:guidance_planner} in the following.

\textbf{Visibility-PRM} computes sparse paths through the free space by randomly sampling positions and creating either a \textit{Guard} or \textit{Connector} node at the sampled position. The type of node depends on the number of Guards that it can directly connect to without colliding (i.e., which Guards are \textit{visible}). A \textit{Guard} is added if no other Guards are visible. A \textit{Connector} (see black dots in Fig.~\ref{fig:guidance-planner}) is added when exactly two Guards are visible and its connection to the Guards is feasible (e.g., satisfying velocity and acceleration limits). Similar to~\cite{zhou_robust_2020}, we also check if any Connectors link to the same Guards (referred to as \textit{neighbors}). If there are neighbors, we keep the new connection if it is distinct from existing connections, which we verify with homotopy comparison function~\eqref{eq:homotopy_comparison}. If it is equivalent and more efficient than the existing connection (e.g., if its connection is shorter), then we replace the existing connector with the new connector. In regular visibility-PRM~\cite{simeon_visibility-based_2000}, the graph is initialized with a Guard at the start and goal positions and new nodes are drawn up to a time or node limit. More details of the algorithm can be found in~\cite[Algorithm 1]{de_groot_globally_2023}.

\subsubsection*{Multiple Goals in Visibility-PRM} 
In this work, we address the limitation that a single goal must be reached by Visibility-PRM, which causes the planner to fail when that goal cannot be reached. We propose to add a Goal node type to Visibility-PRM. Goals inherit the properties of Guards but are inserted initially and are likely visible to each other. When a Connector can connect to multiple Goals, we single out the Goal with the lowest distance to the point on the reference path reached with the reference velocity (i.e., our ideal goal). By supporting multiple goals, we increase the robustness of the guidance planner. In practice, we deploy a grid of goals centered around the reference path (see Fig.~\ref{fig:guidance-planner}).

\subsubsection*{Homotopy Comparison}
We use the homotopy comparison function \eqref{eq:homotopy_comparison} to verify if two trajectories are in the same homotopy class. We implemented \eqref{eq:homotopy_comparison} with the H-signature~\cite{bhattacharya_topological_2012}, winding numbers~\cite{berger_topological_2001} and UVD~\cite{zhou_robust_2020}. Appendix~\ref{ap:homotopy_comparison} provides details on these methods. In our experiments, we use the H-signature that joins the two trajectories to be compared into a loop and verifies if that loop encircles any moving obstacles. If it does, then the two trajectories pass obstacles differently and belong to different homotopy classes.

\textbf{DepthFirstSearch} searches for $P$ paths to each goal, with each search implemented similar to~\cite[Algorithm 1]{rosmann_integrated_2017}.

\textbf{FilterAndSelect} uses homotopy comparison function \eqref{eq:homotopy_comparison} to remove equivalent trajectories to different Goals found by DepthFirstSearch. The set of filtered trajectories $\mathcal{T}_F$ therefore satisfy
\begin{equation}
    \mathcal{H}(\b{\tau}_i, \b{\tau}_j, \mathcal{O}) = 0, \ \forall i, j, i\neq j, \ \b{\tau}_i, \b{\tau}_j \in \mathcal{T}_F.
\end{equation}
The $P$ lowest cost trajectories in $\mathcal{T}_F$ constitute the output $\mathcal{T}_P$.

\textbf{IdentifyAndPropagate} uses homotopy comparison function \eqref{eq:homotopy_comparison} to link new trajectories to trajectories found in the previous iteration. It checks for each previous trajectory $\b{\tau}_i^- \in \mathcal{T}^-_P$ if
\begin{equation}
    \exists \b{\tau}_j \in \mathcal{T}_P, \mathcal{H}(\b{\tau}_i^-, \b{\tau}_j) = 1.
\end{equation}%
A unique identifier, tied to the homotopy class, is passed from $\b{\tau}_i^-$ to $\b{\tau}_j$ if the latter exists. We can use this identifier to decide which trajectory to follow (see Sec.~\ref{sec:decision_making}).

\subsection{Local Planner}\label{sec:local_planner}
To refine the trajectories of the guidance planner, we apply $P$ local planners in parallel. Each local planner refines one of the guidance trajectories $\b{\tau}_i$ and needs to ensure that the final trajectory is dynamically feasible and that it satisfies any other imposed constraints. We pose the following definition:
\begin{definition}\label{def:local_planner}
    (Local Planner) The local planner is an algorithm $L : \mathcal{X}^{N} \to \mathcal{X}^{N}$ that respects constraints. %
\end{definition}
This definition captures many existing optimization-based planners. In this work, we define the local planner through the trajectory optimization in Eq.~\eqref{eq:local_optimization}, where we make two modifications to ensure that the optimized trajectories are in the homotopy class of the associated guidance trajectory. 
First, the trajectory optimization of each local planner uses its guidance trajectory as the initial guess for $\b{x}$. 
The initial guess speeds up convergence but does not guarantee that the optimized trajectory remains in the same homotopy class when there are obstacles. In the next section (Sec.~\ref{sec:constraint-validation}), we provide an example where initialization in distinct homotopy classes still leads to identical optimized trajectories.

\begin{figure}[t]
    \centering
    \begin{subfigure}{0.4\textwidth}
        \centering
        \includegraphics[width=\textwidth]{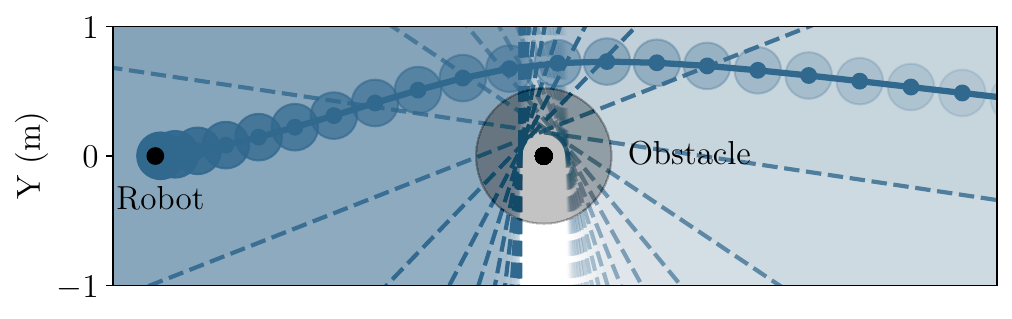}
        \caption{Local planner $1$ plans to evade the obstacle left.}%
        \label{fig:planner1}
    \end{subfigure}%

    \begin{subfigure}{0.4\textwidth}
        \centering
        \includegraphics[width=\textwidth]{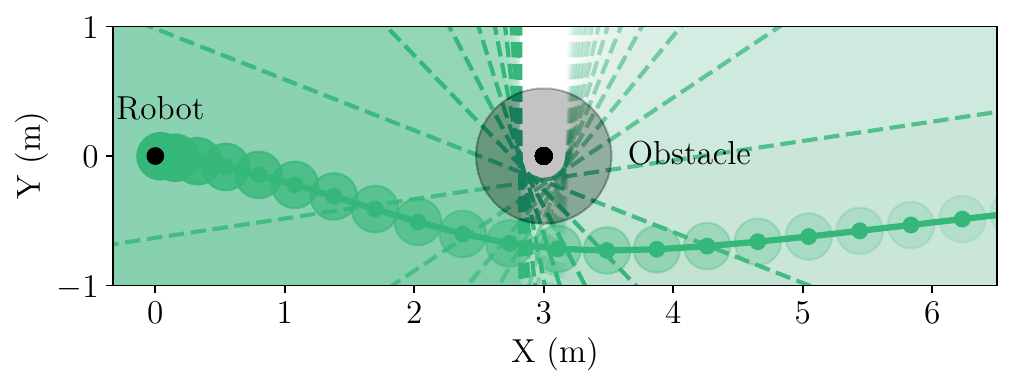}
        \caption{Local planner $2$ plans to evade the obstacle right.}%
        \label{fig:planner2}
    \end{subfigure}
     \caption{Two distinct locally planned trajectories for a robot (black dot) evading an obstacle (black region and dot) that is \textit{static} ($\b{o}_k = \b{o}_0, \ k = 1, \hdots, N$). For both planners, we depict the topology constraints for each time step in their respective colors showing the constraint boundaries (broken lines) and their feasible region (colored regions with increasing transparency over time).}     
    \label{fig:guidance-polygons}
\end{figure}

\begin{figure*}
    \centering%
    \begin{subfigure}{0.32\textwidth}%
        \centering%
        \includegraphics[width=\textwidth,trim={0pt 0cm 0pt 0pt},clip]{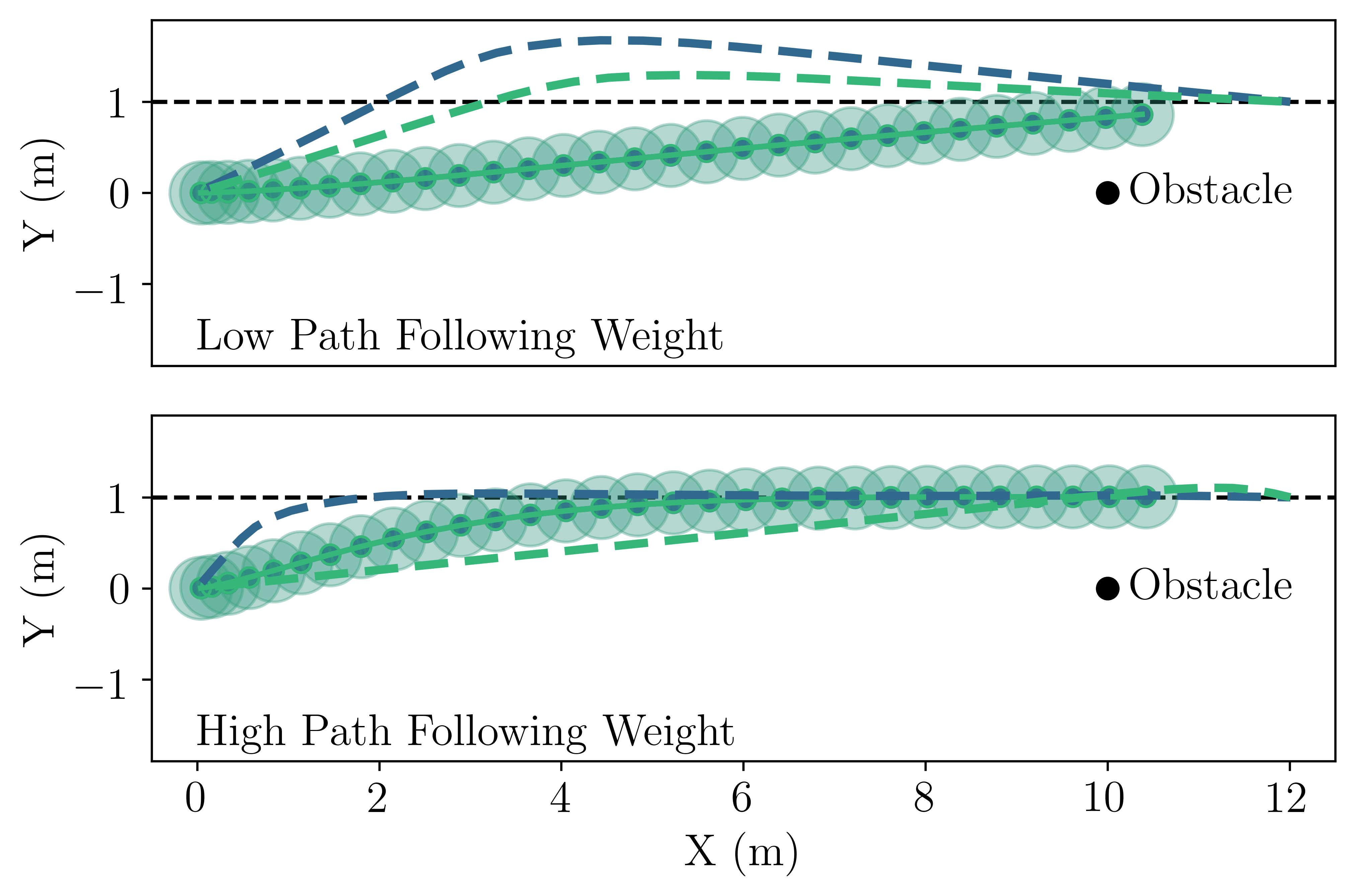}
        \caption{Without homotopy comparison~\eqref{eq:homotopy_comparison}.}%
        \label{fig:no-homotopy}
    \end{subfigure}%
    \begin{subfigure}{0.32\textwidth}%
        \includegraphics[width=\textwidth,trim={0pt 0cm 0pt 0pt},clip]{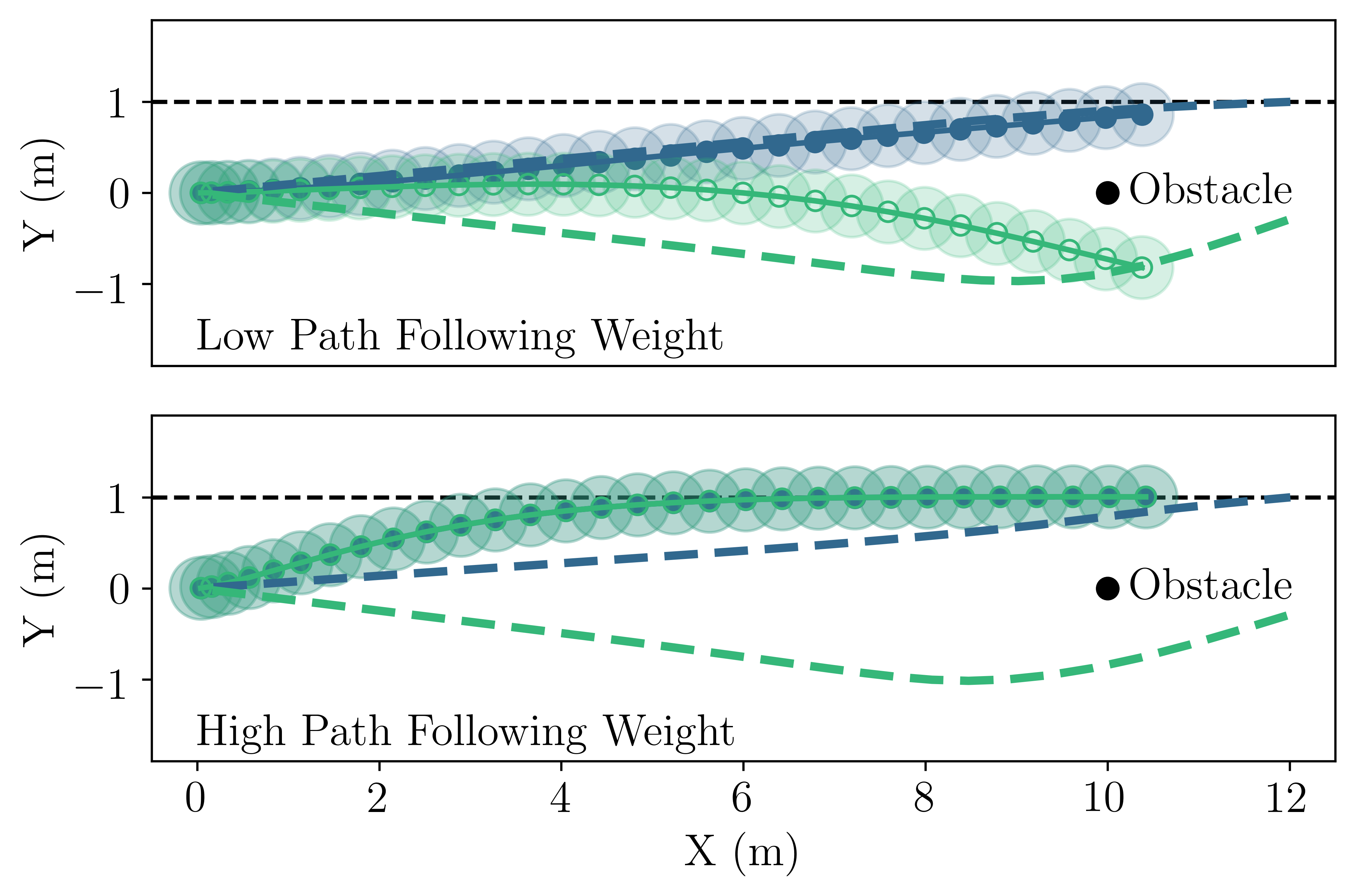}
        \caption{Without constraints~\eqref{eq:homotopy_constraints_opt}.}
        \label{fig:no-constraints}
    \end{subfigure}%
    \begin{subfigure}{0.32\textwidth}%
        \includegraphics[width=\textwidth,trim={0pt 0cm 0pt 0pt},clip]{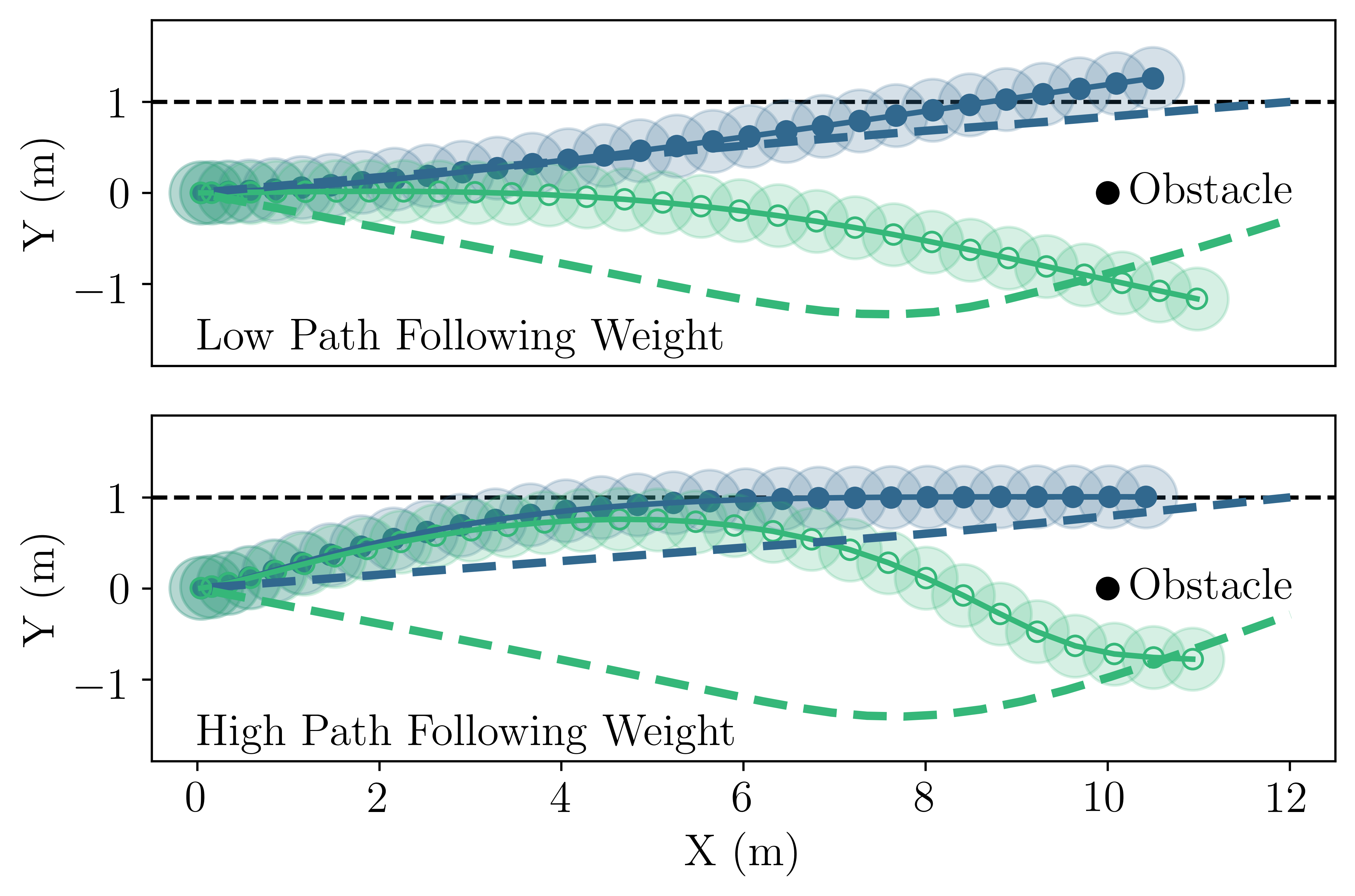}
        \caption{Our proposed method.}
        \label{fig:no-ablation}
    \end{subfigure}%
    \caption{Planned trajectories (lines with shaded discs) tracking a reference path (dashed black line) while avoiding a static obstacle with two guidance trajectories (dashed lines) for a low ($0.01$) and high ($0.3$) path following weight. (a) Without homotopy comparison~\eqref{eq:homotopy_comparison}, guidance trajectories are not distinct and optimized trajectories are identical. (b) Without homotopy constraints, increasing the path following weight results in identical trajectories. (c) With homotopy comparison~\eqref{eq:homotopy_comparison} and homotopy constraints~\eqref{eq:homotopy_constraints_opt}, optimized trajectories are distinct.}
    \label{fig:constraint-ablation}
\end{figure*}

To ensure that the homotopy class of the guidance trajectory is respected, we add to each local planner a set of constraints $g_H(\b{x}_k, \b{o}^j_k, \b{\tau}_{i, k})$. For this purpose, we construct for each time instance $k$ and obstacle $j$ a linear constraint between the guidance trajectory and obstacle position (see Fig.~\ref{fig:guidance-polygons}). With guidance trajectory $\b{\tau}_i$ and obstacle trajectory $\b{o}$, these constraints are given by $\b{A}_k\b{x}_k \leq b_k$, where
\begin{equation}
    \b{A}_k = \frac{\b{o}_k - \b{\tau}_{i, k}}{||\b{o}_k - \b{\tau}_{i, k}||},\quad b_k = \b{A}_k^T(\b{o}_k - A_k(\beta(r + r_{\textrm{obs}}))). \label{eq:homotopy-constraints}
\end{equation}
The relaxation factor $0 \leq \beta \leq 1$ scales the distance that the constraints enforce from each obstacle. A key observation is that, with other collision avoidance constraints in place, the topology constraints can be relaxed ($\beta \approx 0$) such that they are inactive at the obstacle boundary. Since the constraints do ensure that the trajectory remains on the same side of each obstacle, the optimized trajectory is %
in the same homotopy class as the initialization provided by the guidance planner.

The resulting homotopy preserving local planner is given by
\begin{subequations}
    \label{eq:guided_local_optimization}
    \begin{align}
        J_i^* = \min_{\b{u}\in \mathbb{U}, \b{x}\in\mathbb{X}} \quad & \sum_{k = 0}^{N} J(\b{x}_k, \b{u}_k) \label{eq:guided-cost}\\
        \textrm{s.t.} \quad \quad & \b{x}_{k + 1} = f(\b{x}_k, \b{u}_k), \: \forall k,\\
        &\b{x}_{0} = \b{x}_{\textrm{init}}\\
        &g(\b{x}_k, \b{o}^j_k) \leq 0 \ \forall k, j\\
        &g_H(\b{x}_k, \b{o}^j_k, \b{\tau}_{i, k}) \leq 0 \ \forall k, j.\label{eq:homotopy_constraints_opt}
    \end{align}
\end{subequations}
The topology constraints and initialization of the optimization realize the local planning mapping of Eq.~\eqref{eq:local-planner} which, as a function of the guidance trajectory, returns a distinct local optimal trajectory (see Fig.~\ref{fig:method-schematic}). 

\subsection{Enforcing Consistency over Time}\label{sec:constraint-validation}
Our proposed method computes distinct trajectories through two algorithmic features: The guidance trajectories are distinct with respect to homotopy comparison function~\eqref{eq:homotopy_comparison} and the homotopy constraints~\eqref{eq:homotopy_constraints_opt} ensure that the local planner does not change the homotopy class during optimization. We illustrate the necessity of these two components with an example.

Consider a planning scenario with a robot and static obstacle (both at $y = 0$) and the reference path at $y = 1$. We plan the robot's trajectory for a low and high weight on following the path\footnote{The path following weight is the contouring weight from~\cite{brito_model_2019}.}.
Fig.~\ref{fig:constraint-ablation} shows the planned trajectories after optimization. Without homotopy comparison~\eqref{eq:homotopy_comparison} (see Fig.~\ref{fig:no-homotopy}), guidance trajectories are not distinct and lead to identical optimized trajectories. Without homotopy constraints~\eqref{eq:homotopy_constraints_opt} (see Fig.~\ref{fig:no-constraints}), trajectories are distinct for a low path following weight but become identical by increasing the path following weight. This is possible as the final state of the optimized trajectory is free to move to the other side of the obstacle. Our proposed method (see Fig.~\ref{fig:no-ablation}) maintains the two trajectories in both cases, irrespective of the tuning of the objective function.

\subsection{Decision Making}\label{sec:decision_making}
The robot can only execute one trajectory. Since the cost function of the local planners~\eqref{eq:guided-cost} matches that of the original trajectory optimization~\eqref{eq:local_objective}, the quality of the guided plans are directly comparable%
\footnote{As trajectory end points can be distinct, their quality needs to be represented in the cost function. We include a terminal cost that accounts for the deviation of the end point from the reference path.} 
through their optimal costs $J_i^*$. %

The local planners output $P$ optimized trajectories
\begin{equation}
    \mathcal{T}_P^* = \left\{\b{\tau}^*_1, \hdots, \b{\tau}^*_{P}\right\}.
\end{equation}
Since each local planner minimizes the same cost function, the lowest cost trajectory\footnote{We set $J_i^*=\infty$ when the optimization is infeasible.}
\begin{equation}
    \b{\tau}_i^*, \quad i = \textrm{argmin}_i \ J_i^*, \label{eq:direct-decision}
\end{equation}
is the best trajectory under the specified objective. We refer to executing $\b{\tau}_i^*$ as obtained from \eqref{eq:direct-decision} as the \textit{minimal cost} decision.

In practice, frequently switching the homotopy class of the executed trajectory can degrade motion planning performance and lead to collisions even if, in each time instance, the selected trajectory attains the lowest cost. %
We therefore consider a generalization of the decision-making process where the previously selected trajectory is given precedence. This is possible as we maintain a consistent set of trajectories in distinct homotopy classes where the previously executed trajectory is marked. 
This \textit{consistent} decision is given by
\begin{equation}
    \b{\tau}_i^*, \quad i = \textrm{argmin}_i \ w_i J_i^*, \label{eq:consistent-decision} %
\end{equation}
where $w_i = c_i$ if this trajectory was previously selected, with $c_i$ a constant $0 \leq c_i \leq 1$, and $w_i = 1$ otherwise. If $c_i = 0$, then the planner will pick the trajectory with the same homotopy class of the previous iteration, while for $c_i = 1$, we recover the minimal cost decision. In practice, this decision-making scheme improves navigation behavior over consecutive iterations.

\subsection{Theoretical Analysis}\label{sec:analysis}
In the following, we formalize to what extent our proposed planner resolves the nonconvexity of the free space. First, note that due to the cost function and/or nonlinear robot dynamics, the trajectory optimization in Eq.~\eqref{eq:local_optimization} remains nonconvex, even when it is constrained to stay in a single homotopy class. There may therefore be multiple local optima in each homotopy class. This means that the proposed planner does not provably return a globally optimal solution to the optimization in Eq.~\eqref{eq:local_optimization}. We propose instead a weaker notion of globalization\footnote{In the following, with some abuse of notation, $J(\b{\tau})$ refers to the optimal cost of the optimization initialized with trajectory $\b{\tau}$.}.

\begin{definition}
    (Homotopy Globally Optimal) Denote the highest-cost local-optimum of optimization~\eqref{eq:local_optimization} in homotopy class $i$ as $\b{\tau}^-_i$. A trajectory $\b{\tau}$ is said to be a Homotopy Globally Optimal (HGO) solution if its cost is lower or equal to that solution in each homotopy class, that is, if $J(\b{\tau}) \leq J(\b{\tau}^-_i), \ \forall i$, for all homotopy classes that admit a feasible trajectory.
\end{definition}
To prove when the proposed scheme computes an HGO solution, we pose three conditions. These conditions link the solution of Eq.~\eqref{eq:guided_local_optimization} to that of Eq.~\eqref{eq:local_optimization}.

\textbf{Condition 1.} The homotopy constraints are not active ($g_H(\b{x}_k, \b{o}_k^j, \b{\tau}_{i, k}) < 0 \, \ \forall i, k, j$) in the final solution of \eqref{eq:guided_local_optimization}. %

\textbf{Condition 2.} The guidance planner finds a trajectory in each homotopy class where a dynamically feasible trajectory exists.

\textbf{Condition 3.} The executed trajectory is selected via \eqref{eq:direct-decision}.

\begin{theorem}
If Conditions 1-3 hold, T-MPC is HGO for optimization problem \eqref{eq:local_optimization}.
\end{theorem}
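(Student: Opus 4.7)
The plan is a direct chain of inequalities, one per feasible homotopy class, combined with the minimum-cost selection rule. For every homotopy class $i$ that admits a dynamically feasible trajectory, I will show that the $i$-th local planner produces a trajectory whose cost bounds $J(\b{\tau}_i^-)$ from below; the final argmin in \eqref{eq:direct-decision} then inherits this bound against every class simultaneously.

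First, I would fix an arbitrary feasible homotopy class $i$. By Condition~2, the guidance planner returns some $\b{\tau}_i \in \mathcal{T}_P$ lying in class $i$, so the $i$-th local optimization \eqref{eq:guided_local_optimization} is launched and returns an optimal cost $J_i^*$ with minimizer $\b{\tau}_i^*$. The crux is to identify $\b{\tau}_i^*$ with a local optimum of the unmodified problem \eqref{eq:local_optimization} in class $i$. Problem \eqref{eq:guided_local_optimization} differs from \eqref{eq:local_optimization} only by the addition of the linear homotopy constraints $g_H$ in \eqref{eq:homotopy_constraints_opt}. By Condition~1, these are strictly inactive at $\b{\tau}_i^*$, so the associated KKT multipliers are zero and the KKT system of \eqref{eq:guided_local_optimization} coincides with that of \eqref{eq:local_optimization}; equivalently, the two feasible sets agree in a neighborhood of $\b{\tau}_i^*$, so local optimality transfers. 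Moreover, the construction of $g_H$ via \eqref{eq:homotopy-constraints} keeps the trajectory on the same side of each obstacle as $\b{\tau}_i$, hence $\b{\tau}_i^*$ lies in homotopy class $i$. By the definition of $\b{\tau}_i^-$ as the highest-cost local optimum of \eqref{eq:local_optimization} in class $i$, we obtain $J_i^* \leq J(\b{\tau}_i^-)$.

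Finally, Condition~3 selects $\b{\tau}^* = \b{\tau}_{i^\star}^*$ with $i^\star = \arg\min_i J_i^*$, so
\begin{equation*}
    J(\b{\tau}^*) \;=\; \min_{j} J_j^* \;\leq\; J_i^* \;\leq\; J(\b{\tau}_i^-)
\end{equation*}
for every feasible class $i$, which is precisely the HGO property.

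The main obstacle is the KKT/neighborhood step that promotes $\b{\tau}_i^*$ from a local optimum of the homotopy-constrained problem \eqref{eq:guided_local_optimization} to a local optimum of the original \eqref{eq:local_optimization} in class $i$. Condition~1 is doing all the work here: without strict inactivity of $g_H$, the added constraints could bend the optimizer off the manifold of KKT points of \eqref{eq:local_optimization}, so no bound of the form $J_i^* \leq J(\b{\tau}_i^-)$ would be available. The other two conditions are comparatively routine: Condition~2 guarantees that no feasible class is missed in the comparison, and Condition~3 is what lets the pointwise bound in a single class propagate to the executed trajectory.
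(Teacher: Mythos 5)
Your proposal is correct and follows essentially the same route as the paper: Condition~1 is used to transfer local optimality of $\b{\tau}_i^*$ from \eqref{eq:guided_local_optimization} to \eqref{eq:local_optimization} (giving $J_i^* \leq J(\b{\tau}_i^-)$ since $\b{\tau}_i^-$ is by definition the worst local optimum in class $i$), Condition~2 ensures every feasible class is covered, and Condition~3 propagates the per-class bound to the executed trajectory via the argmin. Your KKT/neighborhood elaboration and the explicit remark that $\b{\tau}_i^*$ stays in class $i$ merely flesh out steps the paper states in one line.
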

\begin{proof}
    Under Condition $1$, the solution for each optimization~\eqref{eq:guided_local_optimization}, $\b{\tau}_i^*$, is locally optimal for \eqref{eq:local_optimization} since homotopy constraints \eqref{eq:homotopy_constraints_opt} are the only distinction between the two problems. Therefore, $J(\b{\tau}_i^*) \leq J(\b{\tau}_i^-)$. If Condition $2$ is satisfied, then $\mathcal{T}_P$ contains a guidance trajectory in every feasible homotopy class. Therefore, under Condition $3$, the final trajectory $\b{\tau}^*$ executed by T-MPC satisfies $J(\b{\tau}^*) \leq J(\b{\tau}_i^*) \leq J(\b{\tau}_i^-), \ \forall i$ and we obtain the HGO property.
\end{proof}

This shows under what conditions the proposed planner finds a provably HGO trajectory. Although these conditions are useful for analysis, they are not necessarily satisfied in practice. Condition $1$ can fail if there is no local optimum in a homotopy class or when the linearization around the guidance trajectory restricts the optimization.

Condition $2$ is hard to guarantee in crowded environments. In $2$-D navigation with $M$ obstacles, there can be $2^M$ homotopy classes that do not wind around obstacles. Although robot dynamic constraints and bundled obstacles may reduce this amount in practice, the number of classes can still be too large. Limiting the planner to $P$ classes allows us to plan in real-time, but the executed trajectory may not be HGO. 

Condition $3$ ensures that the lowest cost trajectory is executed but can lead to non-smooth driving behavior over consecutive iterations and it may be preferable to use~\eqref{eq:consistent-decision} instead.

While these conditions may not always be satisfied and the HGO property is not provably obtained in each iteration, we will show that the proposed planner always improves on the local planner in isolation.

\subsection{Non-Guided Local Planner in Parallel}\label{sec:non-guided_planner}
The constraints and initialization provided by the guidance planner allow the local planner to escape poor local optima. 
Once the planner is in the correct homotopy class, the restrictions imposed by the guidance planner (i.e., homotopy constraints) may degrade performance. For this reason, we consider an extension of the proposed planner where the regular local planner without guidance (i.e., the optimization in Eq.~\eqref{eq:local_optimization}) is added to the set of parallel guided local planners. Since this planner is less restricted and does not rely on the global planner, it can occasionally find a better solution. 

Next to practical benefits, this allows us to trivially establish that the proposed scheme does not achieve a higher cost solution than the local planner in isolation.

\begin{theorem}\label{theorem:optimality}
    Consider the planner in Fig.~\ref{fig:method-schematic} that includes a non-guided planner with solution $\bar{\tau}^*$. If a trajectory is selected according to \eqref{eq:direct-decision}, then $J(\b{\tau}^*) \leq J(\bar{\b{\tau}}^*)$.
\end{theorem}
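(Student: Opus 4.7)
The plan is to argue that, once the non-guided planner is added to the parallel pool, its solution $\bar{\b{\tau}}^*$ simply becomes one more candidate in the argmin of \eqref{eq:direct-decision}, so the selected trajectory cannot have strictly larger cost. The proof is essentially a one-line observation about minimum selection; the only work is setting it up with the right notation.

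First I would make explicit the candidate set. With the non-guided planner appended to the $P$ guided planners, the set of trajectories produced in a given iteration is $\mathcal{T}_P^* \cup \{\bar{\b{\tau}}^*\} = \{\b{\tau}_1^*, \ldots, \b{\tau}_P^*, \bar{\b{\tau}}^*\}$, each equipped with its optimal cost (with infeasible optimizations assigned cost $+\infty$, following the convention already stated before \eqref{eq:direct-decision}). Because \eqref{eq:local_optimization} and \eqref{eq:guided_local_optimization} share the same objective \eqref{eq:local_objective}, all these costs are evaluations of the same $J$ and are directly comparable, which is precisely the point emphasized in Sec.~\ref{sec:decision_making}.

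Then the decision rule \eqref{eq:direct-decision}, extended to the enlarged candidate set, selects $\b{\tau}^* = \arg\min\{J(\b{\tau}_1^*), \ldots, J(\b{\tau}_P^*), J(\bar{\b{\tau}}^*)\}$. Since $\bar{\b{\tau}}^*$ is among the arguments of the minimum, we immediately obtain $J(\b{\tau}^*) \leq J(\bar{\b{\tau}}^*)$, which is the claim. No appeal to Conditions 1--3 of Sec.~\ref{sec:analysis} is needed: the result holds irrespective of whether the guided optimizations achieve their homotopy-class local minima or whether the guidance planner covers all feasible homotopy classes.

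There is essentially no hard part here. The only subtlety worth flagging is the treatment of infeasibility: if the non-guided planner is infeasible then $J(\bar{\b{\tau}}^*) = \infty$ and the inequality is trivially satisfied (and T-MPC still returns a feasible guided trajectory whenever one exists), whereas if the non-guided planner is feasible the inequality is nontrivial and shows that adding the non-guided planner can only help. Thus the enlarged T-MPC never does worse than the baseline local planner in isolation, which is the practical take-away the theorem is meant to certify.
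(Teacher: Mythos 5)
Your proposal is correct and follows essentially the same argument as the paper: the non-guided solution $\bar{\b{\tau}}^*$ is one of the candidates over which \eqref{eq:direct-decision} minimizes, so the selected cost cannot exceed $J(\bar{\b{\tau}}^*)$. Your additional remarks on the shared objective and the $J_i^*=\infty$ convention for infeasibility are consistent with the paper's setup and merely make explicit what its one-line proof leaves implicit.
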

\begin{proof}
    Decision \eqref{eq:direct-decision} picks the lowest cost solution from $J(\b{\tau}_0^*), \hdots, J(\b{\tau}_P^*), J(\bar{\b{\tau}}^*)$, which cannot exceed $J(\bar{\b{\tau}}^*)$.
\end{proof}
If the guided plans are always higher or equal cost compared to the non-guided planner, then this planner architecture reduces to the local planner (the non-guided planner is always selected). If they ever have a lower cost, then guidance \textit{must} improve the planner in the sense that it reduces the cost of the executed trajectory. We will show in the following section that the latter holds true. We refer to the method where the non-guided local planner is added in parallel as T-MPC\texttt{++}.

\subsection{Computation Time Analysis}\label{sec:runtime_analysis}
T-MPC plans guidance trajectories before optimization. In the following, we analyze the computational complexity of the guidance planner (see Algorithm~\ref{alg:guidance_planner}), considering the number of PRM samples $n$, obstacles $M$ and distinct trajectories $P$.
\paragraph*{Visibility-PRM} The time complexity of regular Visibility-PRM is dominated by the visibility check. When adding a node, it checks its visibility in the worst case to all nodes (if all nodes are Guards), where each visibility check considers all obstacles. Its time complexity therefore is $O(n^2M)$. We additionally verify that new connections are distinct. The time complexity of a single homotopy comparison is $O(M)$: the H-signature or winding numbers are evaluated for each obstacle. The homotopy class is compared roughly $n$ times if a new distinct connection neighbors all connectors. Its time complexity therefore is $O(n^2M)$ and does not change the time complexity of Visibility-PRM.
\paragraph*{DepthFirstSearch} Each node links to at most one goal. Hence searching the graph for at most $P$ paths to each goal at worst considers each node once. Its time complexity is $O(n)$.
\paragraph*{FilterAndSelect} Sorting $P$ trajectories has time complexity $O(P\log{P})$. Filtering homotopy distinct trajectories from the sorted list must compare a trajectory to $P$ others in the worst case and has time complexity $O(P^2M)$.
\paragraph*{IdentifyAndPropagate}Similarly, comparing the homotopy class of new and existing trajectories has time complexity $O(P^2M)$. Propagating the graph has time complexity $O(n)$.
\paragraph*{Total}The time complexity of the guidance planner is $O((n^2 + P^2)M)$. In practice, this time complexity can be approximated by $O(n^2M)$, given that the number of relevant homotopy classes is typically small (i.e., $n \gg P$). Thanks to the propagation of nodes from the previous iteration, we find that $n$ can be relatively small as well (e.g., $n < 100$). For our use case, a relatively small $n$ and $P$ are usually sufficient to construct a sparse graph from which the relevant homotopy classes can be extracted.

\section{SIMULATION RESULTS}\label{sec:simulation}
In the following, we compare our planner against several baselines on a mobile robot navigating among pedestrians.

\subsection{Implementation}\label{sec:implementation}
Our implementation for T-MPC is written in C\texttt{++}/ROS and will be released open source\footnote{See \url{https://github.com/tud-amr/mpc_planner}}. We will also release the guidance planner as a standalone package.

For the deterministic simulations we implement the optimization-based planner LMPCC~\cite{brito_model_2019} as local planner. The robot dynamics follow second-order unicycle dynamics~\cite{siegwart_introduction_2011}. Its objective, with weights $w$, is given by\footnote{We do not use the repulsive forces around obstacles from~\cite{brito_model_2019} as they lead to more conservative plans and slow down the optimization problem.}
\begin{equation}
    J = w_cJ_c + w_lJ_l + w_vJ_v + w_{\omega}J_{\omega} + w_aJ_a
\end{equation}
for each time instance $k$ in the horizon $N$. Herein, $J_c, J_l$ are the contour and lag error used to follow the reference path, $J_v = ||v - v_{\textrm{ref}}||_2^2$ tracks a desired velocity and $J_{\omega} = ||\omega||_2^2, \ J_a = ||a||_2^2$ weigh the control inputs consisting of the rotational velocity $\omega$ and acceleration $a$. Collision avoidance constraints are imposed with $g(\b{x}_k, \b{o}_k^j) \leq 0$,
\begin{equation}
    g(\b{x}_k, \b{o}_k^j) = 1-(\Delta\b{p}_k^j)^T\b{R}(\phi)^T\begin{bmatrix}\frac{1}{r^2} & 0 \\ 0 & \frac{1}{r^2}\end{bmatrix}\b{R}(\phi)(\Delta\b{p}_k^j), \label{eq:deterministic-collision-constraints}
\end{equation}
here $\Delta\b{p}_k^j = \b{p}_k - \b{o}_k^j$, $\b{R}(\phi)$ is a rotation matrix with orientation $\phi$ of the robot and $r = r_{\textrm{robot}} + r_{\textrm{obs}}$. These nonconvex constraints directly formulate that the robot region should not overlap with that of the obstacles. We solve each parallel local optimization with Forces Pro~\cite{domahidi_forces_2014}. Parameters of the full planner are listed in Table~\ref{tab:settings}. Weights of the guidance and local planners are manually tuned. The planning scheme, including guidance and local planners, is updated in each iteration in a receding horizon manner.

\begin{table}[b]
\centering
\caption{Experimental settings.}
\label{tab:settings}
\begin{tabular}{|>{\raggedright}p{0.13\columnwidth}|>{\centering\arraybackslash}p{0.15\columnwidth}|>{\raggedright\arraybackslash}p{0.52\columnwidth}|}
\hline
\textbf{Parameter Name} & \textbf{Parameter Value} & \textbf{Parameter Description} \\ \hline
$N$                  & $30$                 & Global and local planner horizon        \\ \hline
$\Delta T$                    & $0.2$ s                 & Integration time step   \\ \hline
$h$ & $0.05$ s                & Planning time step   \\ \Xhline{1.8pt}
$n$                   & $30$                 & Visibility-PRM sample limit      \\ \hline
$T_{\textrm{max}}$                   & $10$ ms                & Visibility-PRM time limit      \\ \hline
Eq.~\ref{eq:homotopy_comparison}                   & H-signature                 &  Homotopy comparison function     \\ \hline
$P$                 & $4$                 &  \# of distinct guidance trajectories     \\ \hline
$G$                   & $5\times 5$                 &  Grid of goals (longitudinal $\times$ lateral)     \\ \Xhline{1.8pt}
$r$                   & $0.725$ m              &  Combined obstacle and robot radius      \\ \hline
$w_c$                   & $0.05$             &  Optimization contouring weight      \\ \hline
$w_l$                   & $0.75$             &  Optimization lag weight      \\ \hline
$w_v$                   & $0.55$             &  Optimization velocity tracking weight      \\ \hline
$w_{\omega}$                   & $0.85$             &  Optimization rotational velocity weight      \\ \hline
$w_a$                   & $0.34$             &  Optimization acceleration weight      \\ \Xhline{1.8pt}
Decision                   & Eq.~\eqref{eq:consistent-decision}             &  Type of decision-making      \\ \hline
$c_i$                   & $0.75$               &  Discount factor for trajectory in previously followed homotopy class      \\ \hline
\end{tabular}
\end{table}

\begin{table}[t]
\centering
\caption{Quantative results for interactive navigation simulations of Sec.~\ref{sec:interactive_results} over $200$ experiments with pedestrian motion \textit{prediction} following a constant velocity model. Task duration (Dur.) and runtime are reported as ``mean (std. dev.)''. Without obstacles, the task duration is $12.9$ s. Best planner performances per column are denoted in \textbf{bold}. \sigsss{Underlined} results indicate that T-MPC\texttt{++} significantly outperforms the respective method as tested with a U-test for a significance value of $p=0.001$.}

\resizebox{0.49\textwidth}{!}{\begin{tabular}{|l|l|c|c|l|}
\hline\textbf{\# Ped.} & \textbf{Method} & \textbf{Dur. [s]} & \textbf{Safe (\%)} & \textbf{Runtime [ms]} \\\hline
$0$ & - & 12.9 (0.0) & - & - \\\hline
\multirow{6}{*}{4} & Fren\'et-Planner~[6] & \sigsss{14.0} (0.9) & 77 & 11.6 (2.6) \\
&TEB Local Planner~[8] & \textbf{13.0} (1.1) & \textbf{100} & \textbf{5.8} (5.2) \\
&LMPCC~[3] & 13.1 (0.4) & 98 & 11.3 (2.6) \\
&Guidance-MPCC~[32] & \textbf{13.0} (0.4) & 92 & 13.9 (1.3) \\
&T-MPC (ours) & \textbf{13.0} (0.2) & \textbf{100} & 18.3 (3.5) \\
&T-MPC\texttt{++} (ours) & \textbf{13.0} (0.1) & \textbf{100} & 19.4 (3.7) \\\hline
\multirow{6}{*}{8} & Fren\'et-Planner~[6] & \sigsss{15.1} (1.7) & 64 & 11.6 (2.4) \\
&TEB Local Planner~[8] & 13.8 (1.7) & \textbf{98} & \textbf{7.5} (5.1) \\
&LMPCC~[3] & \sigsss{13.8} (1.3) & 96 & 13.7 (4.2) \\
&Guidance-MPCC~[32] & \textbf{13.2} (0.7) & 92 & 13.4 (1.4) \\
&T-MPC (ours) & \sigs{13.3} (0.7) & 96 & 20.2 (4.8) \\
&T-MPC\texttt{++} (ours) & \textbf{13.2} (0.6) & 96 & 21.4 (4.9) \\\hline
\multirow{6}{*}{12} & Fren\'et-Planner~[6] & \sigsss{16.5} (2.4) & 42 & 14.1 (6.3) \\
&TEB Local Planner~[8] & \sigsss{14.9} (2.4) & 92 & \textbf{8.7} (5.4) \\
&LMPCC~[3] & \sigsss{14.0} (1.5) & 90 & 12.9 (4.5) \\
&Guidance-MPCC~[32] & \textbf{13.6} (1.1) & 86 & 13.6 (1.6) \\
&T-MPC (ours) & \sigsss{14.1} (1.3) & 90 & 18.3 (5.1) \\
&T-MPC\texttt{++} (ours) & \textbf{13.6} (1.0) & \textbf{93} & 20.1 (5.4) \\\hline
\end{tabular}}
\label{tab:pedsim_cvpredictions}
\end{table}

\subsection{Simulation Environment} %
The first simulation environment (see Fig.~\ref{fig:eye-catcher}) consists of a mobile robot (Clearpath Jackal) moving through a $6$ m wide corridor with up to $12$ pedestrians. The robot follows the centerline with a reference velocity of $2$ m/s and is controlled at $20$ Hz. The pedestrians follow the social forces model~\cite{helbing_social_1995} using implementation~\cite{gloor_pedsim_2016}. They interact with other pedestrians and the robot and are aware of the walls. We use a constant velocity model to predict the future pedestrian positions for the planner. The pedestrians have a radius of $0.3$ m. We specify a radius of $0.4$ m in the planners to account for discretization effects, allowing us to clearly identify collisions. Pedestrians spawn on two sides of the corridor with the objective to traverse the corridor. The random start and goal locations are the same for each planner.

\subsection{Comparison to Baselines}\label{sec:interactive_results}
We compare T-MPC and T-MPC\texttt{++} against four baselines. Baselines are selected on the availability of an open-source implementation and their application to navigation in $2$D dynamic environments. We consider the following baselines:
\begin{itemize}
    \item \textbf{Motion Primitives} \textit{(global planner)}~\cite{werling_optimal_2010}: A non-optimization-based global planner that respects the robot dynamics.
    \item \textbf{TEB Local Planner} \textit{(topology-guided planner)}~\cite{rosmann_integrated_2017}: One of the most used local planners in the ROS navigation stack~\cite{rosmann_ros_2023} that considers multiple homotopy classes.
    \item \textbf{LMPCC} \textit{(local planner)}~\cite{brito_model_2019}: An open-source non parallelized MPC (see Sec.~\ref{sec:implementation}). We supply the previous solution shifted forward in time as the initial guess of the optimization.
    \item \textbf{Guidance-MPCC} \textit{(topology-guided planner)}~\cite{de_groot_globally_2023}: Our previous conference work. We updated the guidance planner to that used in this work to make it more competitive with T-MPC\texttt{++}.
\end{itemize}
We use the same weights for the MPC planners (LMPCC, Guidance-MPCC, T-MPC, T-MPC\texttt{++}). Baseline actuation limits and tracking objectives are adapted to match the MPC objectives. TEB Local Planner tuning uses its default but with increased collision avoidance weight (from $10$ to $20$) to decrease collisions in crowded environments. 

We perform the simulations with $4, 8$ and $12$ pedestrians.

\textbf{Evaluation Metrics.} We compare the planners on the following metrics.
\begin{enumerate}
    \item \textit{Task Duration:} The time it takes to reach the end of the corridor.
    \item \textit{Safety:} The percentage of experiments in which the robot does not collide with pedestrians or the corridor bounds.
    \item \textit{Runtime:} Computation time of the control loop.
\end{enumerate}
We note that collisions in simulation may not correspond to collisions in practice but do provide insight into the safety of the planners (more details in Sec.~\ref{sec:discussion}).
\begin{figure}[t]
    \centering
    \includegraphics[width=0.48\textwidth]{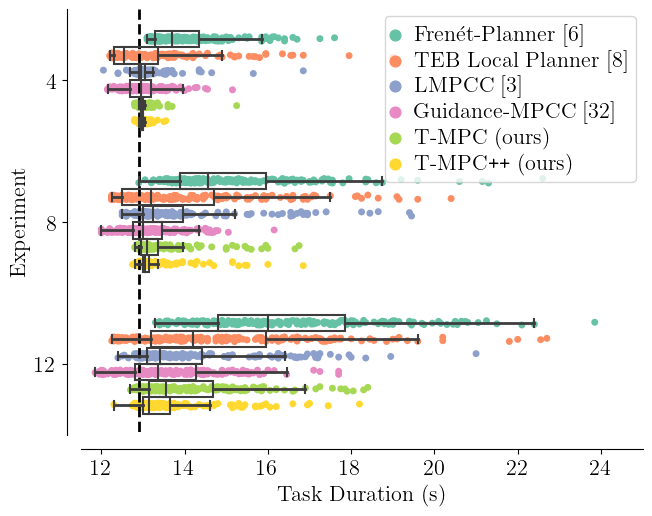}
    \caption{Visualization of the task duration (i.e., the time taken to reach the goal) in Table~\ref{tab:pedsim_cvpredictions}. The dashed vertical line denotes the task duration without obstacles. Our method achieves the smallest variation in task duration and the shortest task duration in crowded environments.}
    \label{fig:pedsim_notinteractive_duration}%
\end{figure}
\begin{figure}[t]
    \centering
    \begin{subfigure}{0.4\textwidth}
        \centering
        \includegraphics[width=\textwidth]{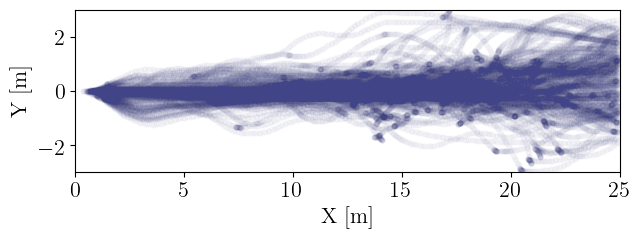}
        \caption{TEB-Planner~\cite{rosmann_integrated_2017}.}
        \label{fig:trajectories_teb}
    \end{subfigure}

    \begin{subfigure}{0.4\textwidth}
        \centering
        \includegraphics[width=\textwidth]{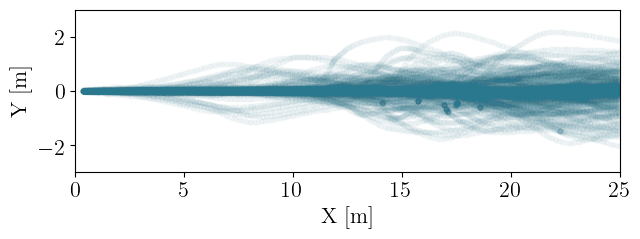}
        \caption{T-MPC\texttt{++} (ours)}
        \label{fig:trajectories_tmpc}
    \end{subfigure}
    \caption{Trajectories of $200$ experiments with $12$ pedestrians for the TEB-Planner and T-MPC\texttt{++}. Our method results in smoother and more consistent robot navigation.}
    \label{fig:noninteractive_trajectories}
\end{figure}

The simulations are performed on a laptop with an Intel i9 CPU@2.4GHz $16$ core CPU. Our implementation of T-MPC and T-MPC\texttt{++} use $P$ and $P+1$ CPU threads, respectively. We terminate threads when they exceed the control period of $50$ ms and in this case select the best trajectory out of the completed optimization problems.

The results over $200$ experiments are summarized in Table~\ref{tab:pedsim_cvpredictions} and the task duration is visualized in Fig.~\ref{fig:pedsim_notinteractive_duration}. The Motion Primitives planner is not safe and has a significantly longer task duration than the other planners even in the least crowded case. In the two more crowded environments, LMPCC has a significantly longer task duration than T-MPC\texttt{++} and is less safe, in part because LMPCC plans a single trajectory. 
When the optimization becomes infeasible, it often does not recover fast enough to avoid oncoming obstacles. TEB Local Planner is marginally safer than LMPCC and maintains competitive average task durations to the other methods in the four and eight pedestrian scenarios under less computational cost. In the crowded scenario, T-MPC\texttt{++} completes the task significantly faster. Additionally, in all cases, T-MPC\texttt{++} has a much smaller standard deviation of the task duration indicating that its behavior is more consistent (visible also in Fig.~\ref{fig:pedsim_notinteractive_duration}). The TEB Local Planner soft constrains collision avoidance which, in crowded environments, leads the robot into poor behaviors (e.g., reversing) due to the shape of the cost function. To further compare T-MPC\texttt{++} and the TEB Local Planner, we visualize their trajectories in Fig.~\ref{fig:noninteractive_trajectories}. Our proposed planner results in smoother and more consistent trajectories and can follow the reference path more closely. We quantitatively compare the smoothness of the planners through the standard deviation on second-order input commands. The deviation on acceleration ($\sigma_a$) and rotational acceleration ($\sigma_{\alpha}$) for TEB Local Planner are higher ($\sigma_a = 0.16$, $\sigma_{\alpha} = 0.12$) than for T-MPC\texttt{++} ($\sigma_a = 0.04$, $\sigma_{\alpha}=  0.05$).

Out of the guidance planners, Guidance-MPCC attains the same mean task duration as T-MPC\texttt{++} but is less consistent (high std. dev.) because the guidance planner, which does not account for the robot dynamics, determines the planner's behavior. This results in larger tracking errors under the same cost function, for example in the $12$ pedestrian case, the mean path and velocity errors of Guidance-MPCC are $0.45$m and $0.47$m/s, respectively, compared to values of $0.20$m and $0.42$m/s for T-MPC\texttt{++}. 
It also leads to more collisions than LMPCC. \mbox{T-MPC} is generally faster than LMPCC, except for the $12$ pedestrian case. In this environment, the local planner may find solutions that the guidance planner did not, given that the space is cluttered. T-MPC++ demonstrates superior navigation performance over the other planners: it is significantly faster (with the exception of Guidance-MPCC), varies less in its task duration (lower std. dev.) and is safer in almost all cases (the TEB Local Planner is safer in the $8$ pedestrian case). T-MPC\texttt{++} has higher computational demands than the other planners. Compared to the other MPC planners, T-MPC\texttt{++} first computes guidance plans. We measured that this step takes approximately $5$ ms on average (included in the runtime of Table~\ref{tab:pedsim_cvpredictions}) in all scenarios.

\subsection{Crowded Baseline Comparison}\label{sec:results_crowded}
We further compare T-MPC\texttt{++} against TEB Local Planner in a square-shaped crowded environment with $50$ pedestrians. The robot's task is to move diagonally through the environment at $v_{\textrm{ref}}=1.5$ m/s. We consider the $12$ pedestrians closest to the plan, with preference for nearby pedestrians in both methods. Pedestrians are removed when they reach the goal to prevent unpredictable turns. Table~\ref{tab:openspace} presents the results. T-MPC\texttt{++} is significantly faster and the standard deviation of the task duration is less than half that of TEB Local Planner. TEB Local Planner is on average computationally faster as it does not compute a new plan in each iteration. When it does compute a plan, its computation time can exceed the planning frequency of $20$ Hz as shown by the maximum in Table~\ref{tab:openspace} (it exceeded $50$ ms in $0.26$\% of its iterations). In contrast, T-MPC\texttt{++} is explicitly limited to $50$ ms such that its maximum computation time remains below $50$ ms.

\begin{table}[t]
\centering
\caption{Quantative results in crowded environment of Sec.~\ref{sec:results_crowded} over $200$ experiments. Notation follows that of Table~\ref{tab:pedsim_cvpredictions}. Without obstacles, the task duration is $20.6$ s. The runtime is denoted as ``mean (max)''.}
\resizebox{0.49\textwidth}{!}{\begin{tabular}{|l|l|c|c|l|}
\hline\textbf{\# Ped.} & \textbf{Method} & \textbf{Dur. [s]} & \textbf{Safe (\%)} & \textbf{Runtime$^*$ [ms]} \\\hline
$0$ & - & 20.6 (0.0) & - & - \\\hline
\multirow{2}{*}{50} & TEB Local Planner~[8] & \sigsss{22.4} (2.4) & \textbf{92} & \textbf{9.4} (177.1) \\
&T-MPC\texttt{++} (ours) & \textbf{21.0} (0.9) & \textbf{92} & 21.2 (46.9) \\\hline
\end{tabular}}
\label{tab:openspace}
\end{table}

\subsection{Sensitivity Studies}
To provide more insight into the key parameters of our approach, we study their sensitivity.

\subsubsection{Sensitivity to the Number of Trajectories $P$}
To study how the number of guidance trajectories impacts the task duration, we run $100$ experiments in the $12$ pedestrians environment and compute $P = 0, \hdots, 6$ guidance trajectories. The $P=0$ case corresponds to the non-guided local planner, LMPCC~\cite{brito_model_2019}.

Fig.~\ref{fig:path-comparison} displays statistics on task duration. We observe from Fig.~\ref{fig:path-comparison-with} that guidance trajectories reduce the task duration compared to the local planner. Fig.~\ref{fig:path-comparison-infeasibility} shows for both T-MPC and T-MPC\texttt{++} how often the planner becomes infeasible. It indicates that the availability of at least two plans makes it more likely that a trajectory is found and shows that the non-guided planner added in T-MPC\texttt{++} further improves feasibility.

\subsubsection{Sensitivity to the Consistency Parameter $c_i$}\label{sec:sensitivity-constistency}
We recall that $c_i$ (see Eq.~\eqref{eq:consistent-decision}) expresses our preference to follow the trajectory with the same passing behavior as in the previous planning iteration. We vary $c_i\in[0, 1]$ within its range, including $c_i = 0$ that enforces the robot to follow the trajectory in the previous homotopy class, if it still exists, and $c_i = 1$ that expresses no preference.

Fig.~\ref{fig:sensitivity-consistency} compares task duration and infeasibility. Both extreme values show poor performance. For $c_i = 0$, the task duration increases, while for $c_i = 1$, the planner becomes infeasible more often (it is indecisive). We deployed $c_i=0.75$, which best retains the feasibility of the optimization.
\begin{figure}[t]
    \begin{subfigure}{0.47\textwidth}
        \centering
        \includegraphics[width=\textwidth]{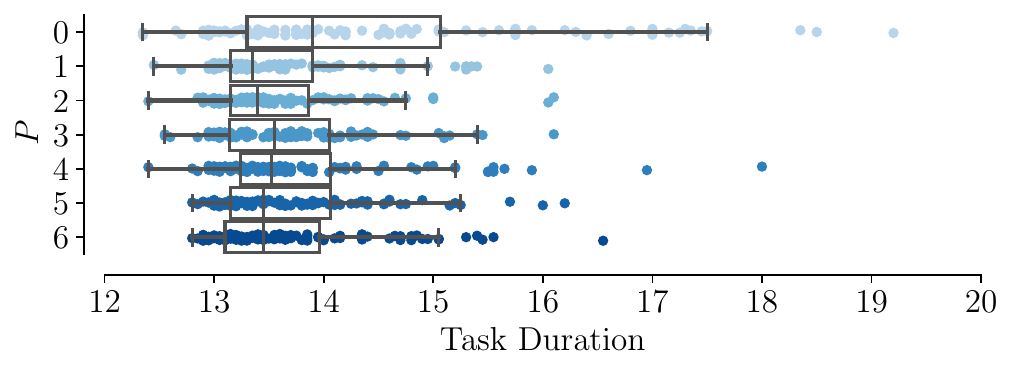}
        \caption{Task duration of T-MPC\texttt{++}.}%
        \label{fig:path-comparison-with}
    \end{subfigure}

    \begin{subfigure}{0.47\textwidth}
        \centering
        \includegraphics[width=\textwidth]{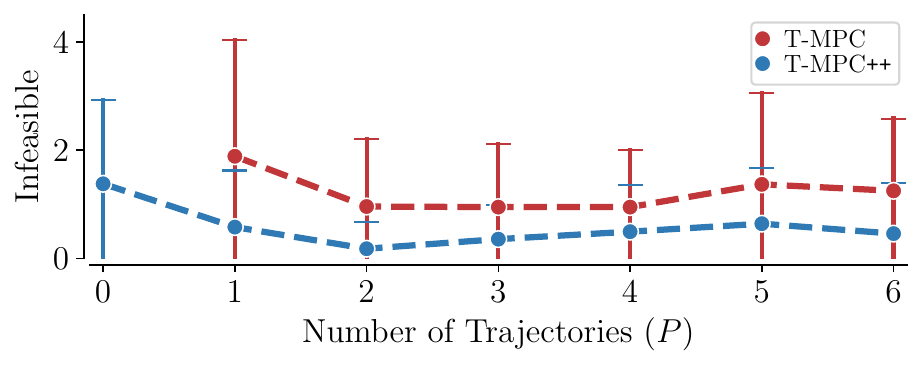}
        \caption{Infeasible iterations (mean and std) per experiment.}%
        \label{fig:path-comparison-infeasibility}
    \end{subfigure}
    \caption{Sensitivity study of the number of guidance trajectories $P$. (a) Task duration of T-MPC\texttt{++}, individual experiments denoted by dots. (b) Mean and standard deviation of the number of control iterations in which the optimization is infeasible per experiment (out of approximately 280 iterations each) for the same simulations.}
    \label{fig:path-comparison}%
\end{figure}

\begin{figure}[t]
    \centering%
    \begin{subfigure}{0.45\textwidth}%
        \includegraphics[width=\textwidth,trim={0pt 0cm 0pt 0pt},clip]{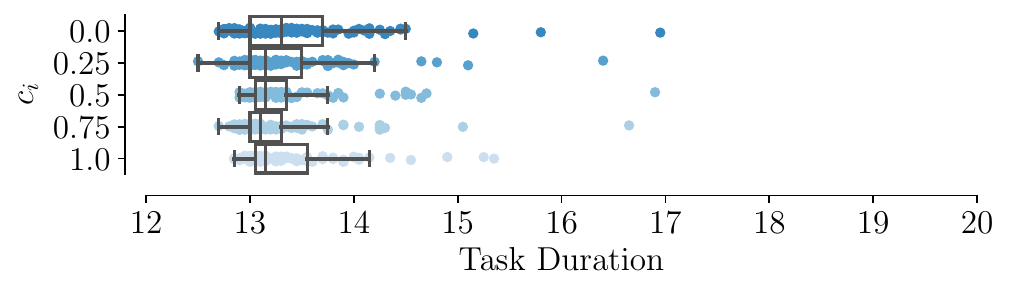}
    \end{subfigure}%

    \begin{subfigure}{0.38\textwidth}%
        \includegraphics[width=\textwidth,trim={0pt 0cm 0pt 0pt},clip]{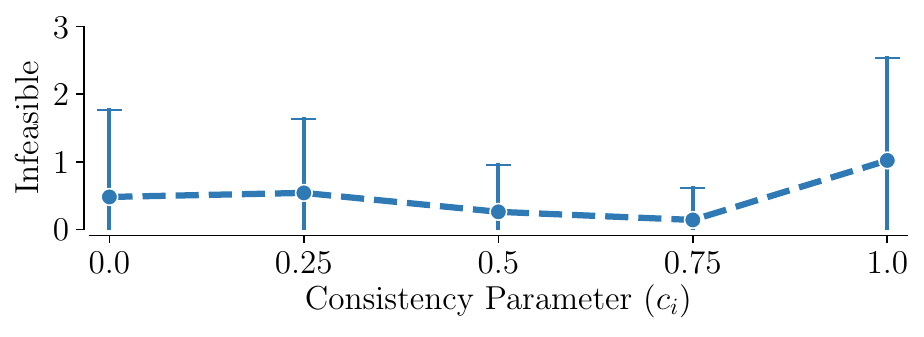}
    \end{subfigure}%

    \caption{Sensitivity study of the consistency parameter $c_i$ comparing the task duration and infeasibility of the planner. Note that a lower $c_i$ makes the planner \textit{more} consistent.}
    \label{fig:sensitivity-consistency}
\end{figure}

\subsection{Empirical Cost Comparison}\label{sec:empirical_cost}
To verify that T-MPC\texttt{++} is able to find lower-cost local optima than the local planner in isolation (as discussed in Sec.~\ref{sec:analysis}), we compare the optimal cost of the executed trajectory attained by LMPCC (local planner), T-MPC and T-MPC\texttt{++} that use identical cost functions. We perform this study in the crowded environment of Sec.~\ref{sec:results_crowded} over $50$ experiments. Table~\ref{tab:cost} indicates that T-MPC finds lower-cost local optima than LMPCC and does so consistently (lower std. dev.). T-MPC\texttt{++} further reduces this cost and its deviation. The average cost of T-MPC\texttt{++} is less than half that of the non-guided planner. %

\begin{table}[t]
\centering
\caption{Comparison of the attained cost in the crowded environment of Sec.~\ref{sec:results_crowded} over $50$ experiments. Notation, including the cost that is tested for significance, follows that of Table~\ref{tab:pedsim_cvpredictions}. The cost excludes infeasible planner iterations.}
\resizebox{0.49\textwidth}{!}{\begin{tabular}{|l|l|c|c|c|l|}
\hline\textbf{\# Ped.} & \textbf{Method} & \textbf{Dur. [s]} & \textbf{Safe (\%)} & \textbf{Cost} & \textbf{Runtime [ms]} \\\hline
\multirow{3}{*}{50} & LMPCC~[3] & \sigss{21.9} (1.7) & 84 & \sigsss{2.25} (9.36) & \textbf{7.1} (4.8) \\
&T-MPC (ours) & \sigs{21.6} (1.5) & 84 & \sigsss{1.60} (6.21) & 20.6 (7.7) \\
&T-MPC\texttt{++} (ours) & \textbf{21.0} (0.9) & \textbf{96} & \textbf{1.05} (4.67) & 21.7 (7.4) \\\hline
\end{tabular}}
\label{tab:cost}
\end{table}

\subsection{T-MPC Under Obstacle Uncertainty}\label{sec:results_uncertainty}
To illustrate that T-MPC applies to different local planner formulations, we deploy T-MPC on top of CC-MPC~\cite{zhu_chance-constrained_2019}. CC-MPC is a local planner that considers the probability of collision with obstacles when their motion is represented by a Gaussian distribution at each time step. We assume that the motion of the obstacles follows the uncertain dynamics
\begin{equation}
    \b{o}^j_{k + 1} = \b{o}^j_k + (\b{v}^j_k + \b{\eta}^j_k) dt, \ \b{\eta}^j_k \sim \mathbb{P}^j_k,
\end{equation}%
Here $\b{v}_k$ is the velocity that follows the social forces model as in previous experiments. The distribution of $\b{\eta}^j_k \in \mathbb{R}^2$ follows a bivariate Gaussian distribution, $\b{\eta}^j_k \sim \mathcal{N}(\b{\mu}^j_k, \b{\Sigma}^j_k)$, where $\b{\mu}^j_k = \b{0}$ and $\b{\Sigma}^j_k = \sigma\b{I}$. In this simulation we set $\sigma = 0.3$. Instead of deterministic collision avoidance constraints~\eqref{eq:deterministic-collision-constraints}, we formulate a chance constraint with risk $0 < \epsilon < 1$
\begin{equation}
    \mathbb{P}\left[||\b{p}_k - \b{o}^j_k||_2^2 \geq r \right] \geq 1 - \epsilon, \ \forall k, j
\end{equation}
that specifies collision avoidance to hold with a probability of $1 - \epsilon$ for each agent and time instance. \mbox{CC-MPC}~\cite{zhu_chance-constrained_2019} reformulates this constraint using the Gaussian $1$-D CDF in the direction of the obstacle. In this formulation, the collision avoidance constraint is linearized and reduces to
\begin{equation*}
    (\b{A}_k^j)^T(\b{p}_k - \b{o}^j_k) - r - r^j \geq \textrm{erf}^{-1}(1 - 2\epsilon) \sqrt{2(\b{A}_k^j)^T\bm{\Sigma}_k^j(\b{A}_k^j)},
\end{equation*}
with $\b{A}_k^j$ as in \eqref{eq:homotopy-constraints} and where $\textrm{erf}^{-1}$ is the inverse standard error function.

We apply T-MPC with the non-guided CC-MPC in parallel (referred to as TCC-MPC\texttt{++}). The uncertainty directly affects the local planner, while the guidance planner only avoids the mean obstacle trajectories. It may happen that some guidance trajectories are not feasible for the local planner.

We deploy both planners in the scenario with $12$ randomized pedestrians and compare the planners under high ($\epsilon=0.1$), medium ($\epsilon=0.01$) and low ($\epsilon=0.001$) risk settings. The results are shown in Table~\ref{tab:uncertainty_results} and visualized in Fig.~\ref{fig:uncertainty_duration}. \mbox{TCC-MPC\texttt{++}} consistently outperforms CC-MPC in isolation, leading to significantly faster and more consistent task completion and fewer collisions. The local planner often collides when it becomes infeasible since it cannot recover. The initialization provided by the guidance planner resolves infeasibility and improves robustness.
\begin{table}[t]
\centering
\caption{Quantative results for simulations with uncertain obstacle motion of Sec.~\ref{sec:results_uncertainty} over $200$ experiments. Notation follows that of Table~\ref{tab:pedsim_cvpredictions}.}
\resizebox{0.49\textwidth}{!}{\begin{tabular}{|l|l|c|c|l|}
\hline\textbf{\#} & \textbf{Method} & \textbf{Task Duration [s]} & \textbf{Safe (\%)} & \textbf{Runtime [ms]} \\\hline
\multirow{2}{*}{High Risk} & CC-MPC~[4] & \sigsss{15.8} (2.3) & 91 & \textbf{17.0} (8.7) \\
&TCC-MPC\texttt{++} (ours) & \textbf{14.1} (0.8) & \textbf{96} & 34.5 (7.9) \\\hline
\multirow{2}{*}{Medium Risk} & CC-MPC~[4] & \sigsss{16.5} (2.7) & 92 & \textbf{17.4} (9.7) \\
&TCC-MPC\texttt{++} (ours) & \textbf{15.1} (1.4) & \textbf{93} & 35.8 (8.2) \\\hline
\multirow{2}{*}{Low Risk} & CC-MPC~[4] & \sigsss{17.2} (2.5) & 90 & \textbf{18.5} (10.2) \\
&TCC-MPC\texttt{++} (ours) & \textbf{16.1} (1.3) & \textbf{97} & 38.1 (7.9) \\\hline
\end{tabular}}
\label{tab:uncertainty_results}
\end{table}

\begin{figure}[t]
    \centering
    \includegraphics[width=0.48\textwidth]{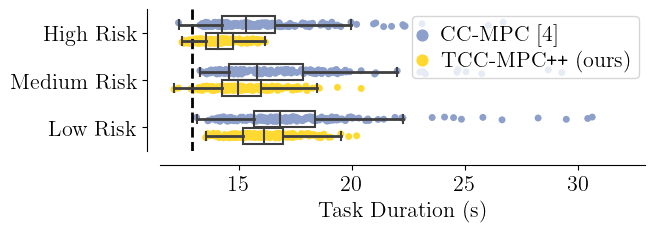}
    \caption{Visualization of the task duration in Table~\ref{tab:uncertainty_results}.}
    \label{fig:uncertainty_duration}%
\end{figure}%

\section{REAL-WORLD EXPERIMENTS}\label{sec:experiments}
We demonstrate the proposed planner in a real-world setting on a mobile robot driving among pedestrians.

\subsection{Experimental Setup}
The experiment takes place in a $5$m$\times 8$m square environment where participants walk among the robot. The robot and pedestrian positions are detected by a motion capture system at $20$Hz. The pedestrian positions are passed through a Kalman filter and constant velocity predictions are passed to the planner. The robot is given a reference path between two opposite corners and turns around once a corner is reached. 

\begin{figure*}
    \centering
    \begin{subfigure}{0.33\textwidth}
        \centering
        \includegraphics[width=\textwidth]{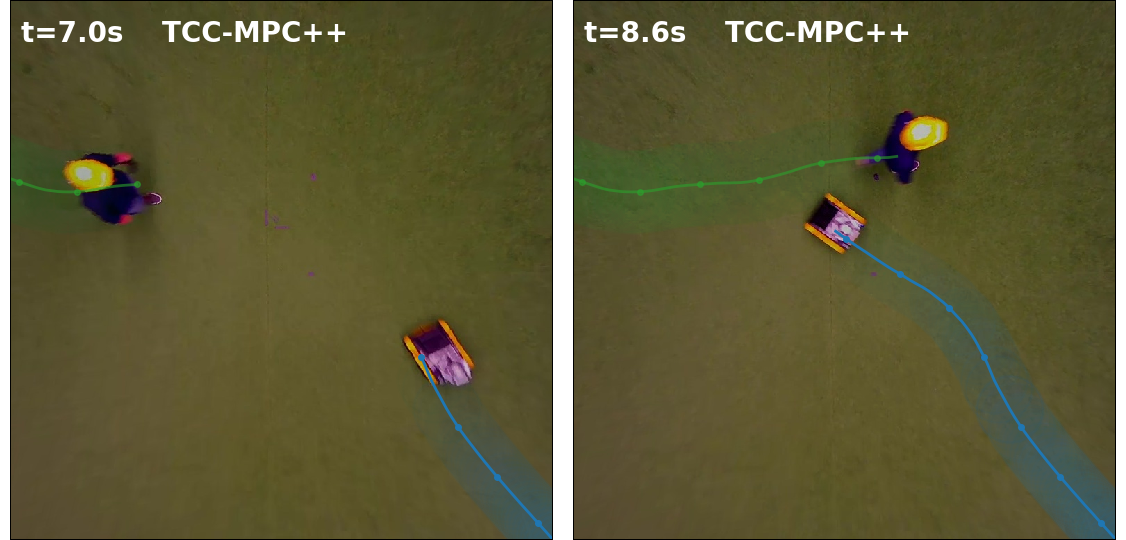}
        \caption{Encounter with a turning pedestrian.}
        \label{fig:1person_a}
    \end{subfigure}
    \hspace{0.03\textwidth}
    \begin{subfigure}{0.33\textwidth}
        \centering
        \includegraphics[width=\textwidth]{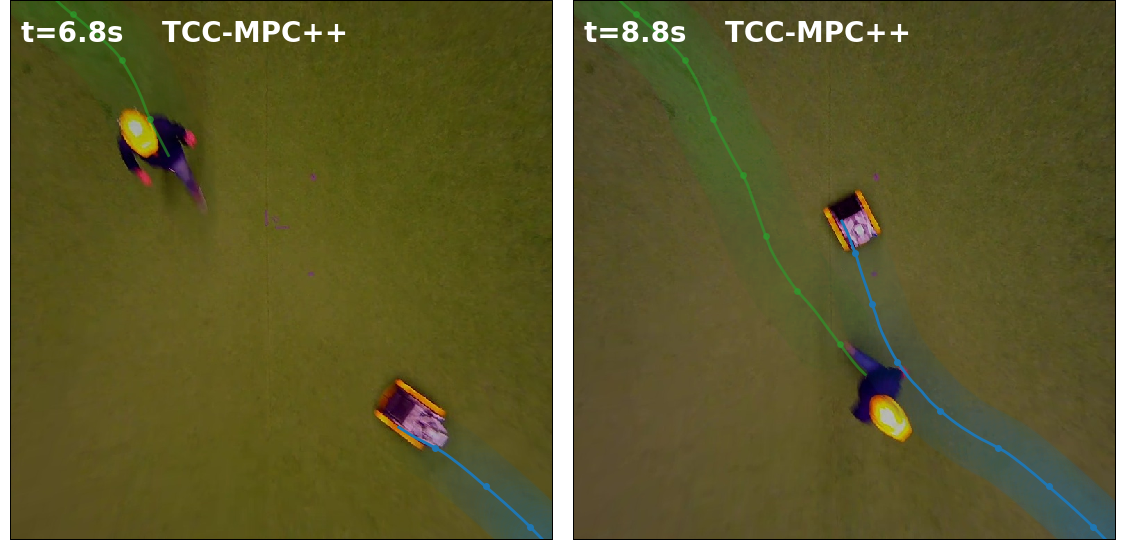}
        \caption{Pedestrian switching intended passing side.}
        \label{fig:1person_b}
    \end{subfigure}
    \caption{Overlayed top view camera images of real-world experiments of TCC-MPC\texttt{++} with one pedestrian. Blue and green overlays denote the robot and pedestrian trajectories, respectively. Timestamps of each image denoted in the upper left corner.}
    \label{fig:1person}
\end{figure*}%
\begin{figure*}%
    \centering
    \begin{subfigure}{0.73\textwidth}%
        \centering%
        \includegraphics[width=\textwidth]{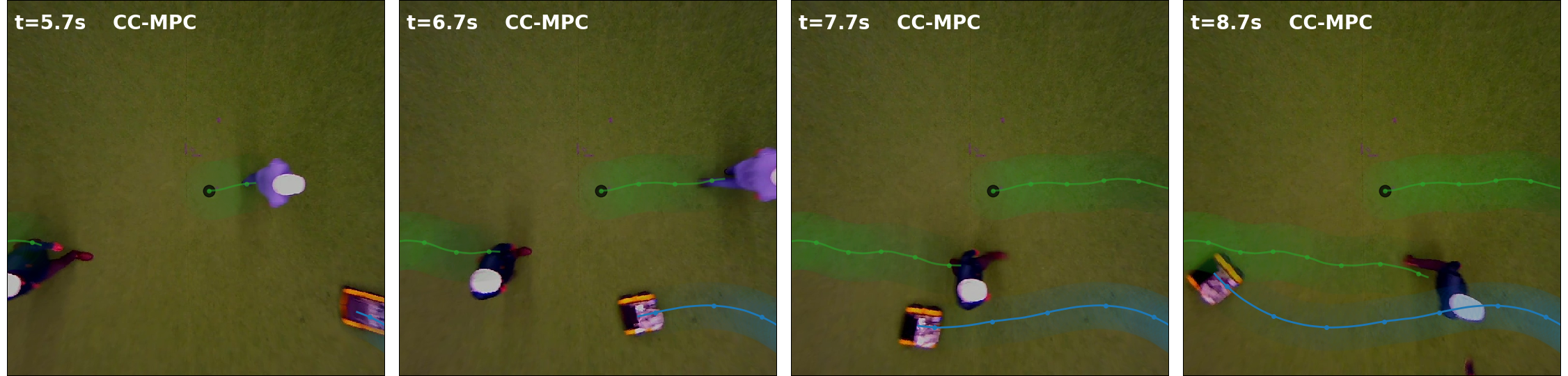}%
        \caption{The baseline fails to find the more efficient trajectory passing between two uncoming pedestrians.}%
        \label{fig:cc-mpc-a}%
    \end{subfigure}
    \begin{subfigure}{0.73\textwidth}%
        \centering%
        \includegraphics[width=\textwidth]{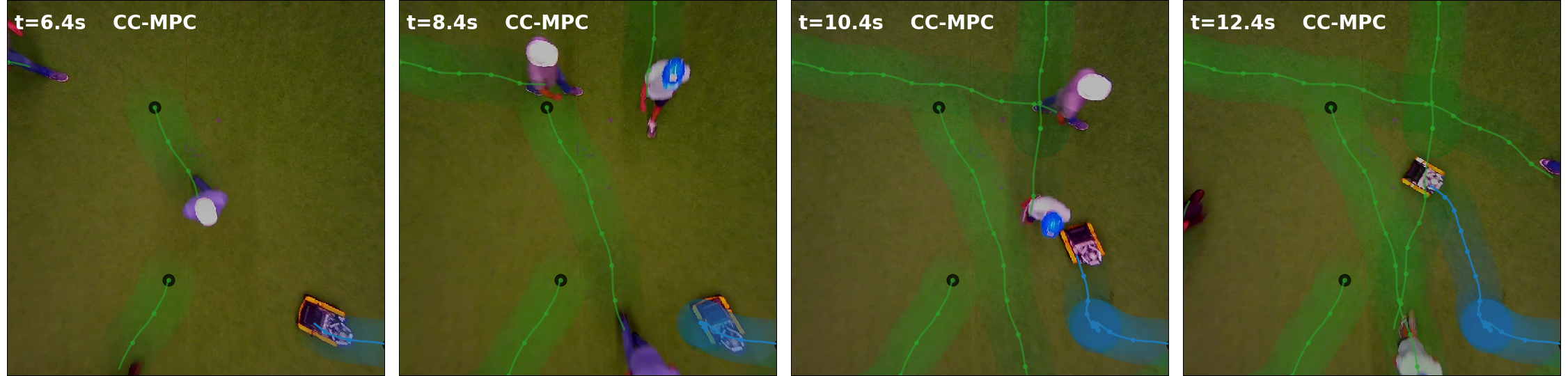}%
        \caption{The baseline becomes infeasible and does not recover quickly (note the time scale).}%
        \label{fig:cc-mpc-b}%
    \end{subfigure}%
    \caption{Trajectories of baseline CC-MPC~\cite{zhu_chance-constrained_2019} overlayed on camera images. Black dots denote pedestrian start positions.}%
    \label{fig:cc-mpc}%
\end{figure*}%

\subsection{One Pedestrian}
In the first set of experiments, a single pedestrian interacts with the robot. We run TCC-MPC\texttt{++} to evade the pedestrian. Fig.~\ref{fig:1person} shows the results of two experiments. In Fig.~\ref{fig:1person_a}, the pedestrian turns and speeds up to pass the robot in front. The robot changes its behavior from passing in front to passing behind to let the pedestrian pass. In Fig.~\ref{fig:1person_b}, the pedestrian changes its intended passing side from the right to the left side of the robot. The planner detects the change in direction and switches sides, passing the pedestrian smoothly.

\begin{figure*}
    \centering
    \begin{subfigure}{0.8\textwidth}
        \centering
        \includegraphics[width=\textwidth]{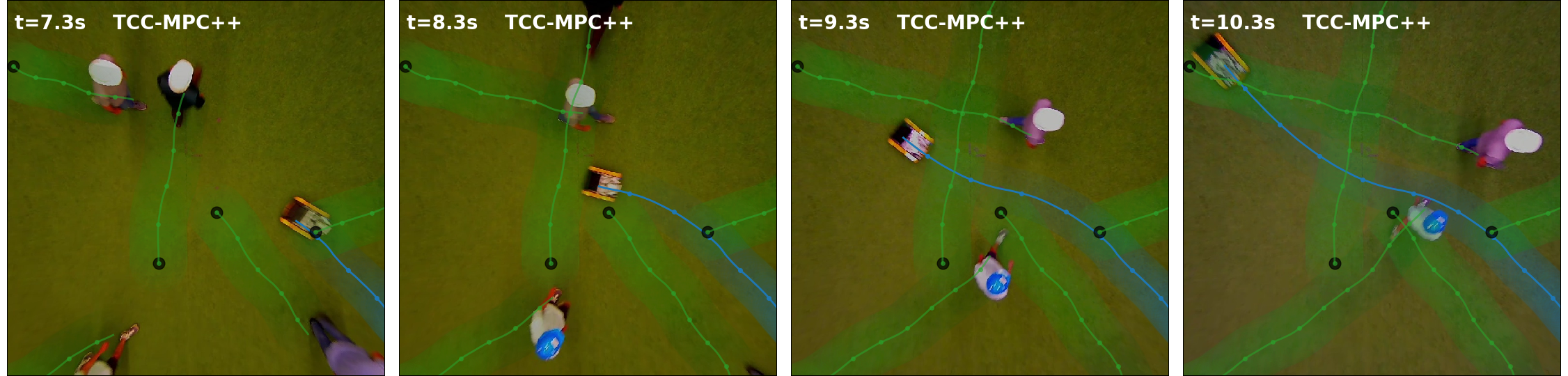}
        \caption{The planner passes between two humans smoothly.}
        \label{fig:ggmpcc-a}
    \end{subfigure}
    \begin{subfigure}{0.8\textwidth}
        \centering
        \includegraphics[width=\textwidth]{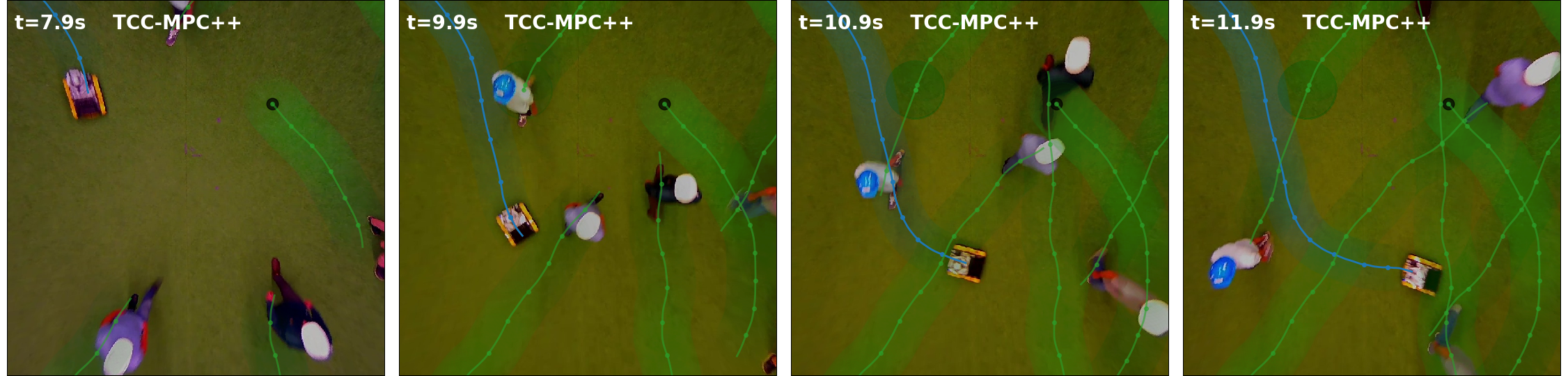}
        \caption{Noticing the oncoming pedestrians, the planner passes behind them smoothly.}
        \label{fig:ggmpcc-b}
    \end{subfigure}
    \begin{subfigure}{0.8\textwidth}
        \centering
        \includegraphics[width=\textwidth]{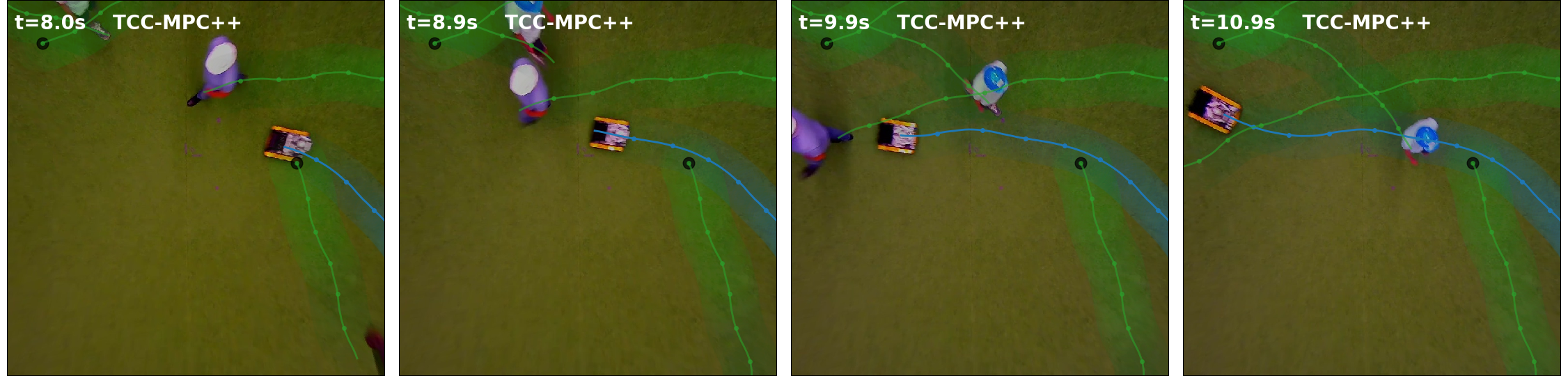}
        \caption{The planner follows another passing pedestrian to pass.}
        \label{fig:ggmpcc-c}
    \end{subfigure}
    \caption{Trajectories of TCC-MPC\texttt{++} overlayed on camera images for three examples.}
    \label{fig:ggmpcc}
\end{figure*}%

\subsection{Five Pedestrians}\label{sec:five_peds}
In the second set of experiments, we run LMPCC, T-MPC++, CC-MPC and TCC-MPC\texttt{++} for $5$ minutes each with $5$ pedestrians in the space. The participants were instructed to walk naturally toward a point on the other side of the lab. Before starting the recorded experiments, participants were asked to walk for $3$ minutes without the robot, to get used to the environment. The participants were not aware of the planner running in each experiment and the order of planners was randomized. Several runs of CC-MPC and TCC-MPC\texttt{++} are visualized in Fig.~\ref{fig:cc-mpc} and Fig.~\ref{fig:ggmpcc}, respectively. The reader is encouraged to watch the associated video~\cite{video_topology-driven_2024}. Fig.~\ref{fig:cc-mpc-a} highlights a case where CC-MPC conservatively avoids two pedestrians, not detecting the pathway in between the pedestrians. In Fig.~\ref{fig:cc-mpc-b}, CC-MPC gets infeasible and is not able to replan fast enough, requiring the robot to wait for a pedestrian to pass. In all experiments of Fig.~\ref{fig:ggmpcc}, the robot passes pedestrians efficiently and smoothly. Fig.~\ref{fig:ggmpcc-a} and~\ref{fig:ggmpcc-b} highlight cases where the planner has to navigate through the crowd and does so successfully. In Fig.~\ref{fig:ggmpcc-c}, we observe the planner falling in line with a pedestrian to pass another pedestrian.

After the experiments, the participants unanimously preferred the planners running in experiments $1$ and $4$, which were both guided planners. In general, participants reported that the guided planners felt safer and more predictable than the non-guided planners.

\section{DISCUSSION}\label{sec:discussion}
Our results have indicated that our proposed global and local planning framework can improve the safety, consistency and time efficiency of the planner. We discuss further insights related to our planning framework in the following.

\subsection{Safety in Dynamic Environments}\label{sec:discussion_collisions}
Planners deployed in the real world must be safe (i.e., collisions are unacceptable) and should not impede humans more than necessary. Table~\ref{tab:pedsim_cvpredictions} indicated that all planners collided at least once in simulation. To identify the source of collisions, we repeated the experiments using the social forces model for both the pedestrian simulation model and the prediction model of the planner (i.e., removing prediction mismatch). The results are summarized in Table~\ref{tab:pedsim_interaction}. Collisions are almost reduced to zero for T-MPC\texttt{++} in this case, which shows that prediction mismatch causes most of the collisions.

\begin{table}[b]
\centering
\caption{Quantative results for the simulations of Sec.~\ref{sec:interactive_results} repeated with the pedestrian motion predictions following the social forces model.}
\resizebox{0.49\textwidth}{!}{\begin{tabular}{|l|l|c|c|l|}
\hline\textbf{\# Peds.} & \textbf{Method} & \textbf{Task Duration [s]} & \textbf{Safe (\%)} & \textbf{Runtime [ms]} \\\hline
\multirow{1}{*}{4} & T-MPC\texttt{++} (ours) & 13.0 (0.1) & 100 & 25.4 (4.9) \\
\multirow{1}{*}{8} & T-MPC\texttt{++} (ours) & 13.2 (0.4) & 100 & 27.7 (6.2) \\
\multirow{1}{*}{12} & T-MPC\texttt{++} (ours) & 13.6 (0.7) & 98 & 26.7 (7.0) \\\hline
\end{tabular}}
\label{tab:pedsim_interaction}
\end{table}

In the real-world experiments, we did not observe collisions. We analyzed the video of the experiments and visually annotated the following instances where pedestrians had to take evasive action: $3$ out of $63$ interactions using LMPCC, $4$ out of $61$ interactions using CC-MPC, $0$ out of $61$ interactions for T-MPC\texttt{++} and $0$ out of $60$ interactions for TCC-MPC\texttt{++}. 
This indicates that pedestrians take direct evasive action when the robot impedes their safety, deviating from the social forces model. Collisions in simulation therefore seem to correspond to cases where the human must take evasive action in practice. %

We also observed in the real world that evasive action was necessary when the baseline planner became infeasible. Safety guarantees provided through constraints only hold when the optimization problem is feasible and can impose danger when no solution is found in time. Our proposed approach reduced this danger by planning more than a single trajectory and we did not observe dangerous cases of infeasibility for T-MPC.

\subsection{Advantages of Parallel Optimization}
Deploying several local planners in parallel makes it more likely that the planner returns a trajectory (in time) as a 
feasible trajectory can be provided by not a single, but several optimization problems. We suspect that this effect is more pronounced when the planning problems are more diverse. In addition, parallelization reduces the maximum computation times. The fastest solved optimization immediately provides a trajectory and other problems can be ignored if necessary. The parallel planner computation time is, at worst, equal to that of a single planner but is almost always faster. Several CPU cores are necessary to parallelize the planner but are usually available. These two advantages inherently improve all performance metrics (e.g., safety and time efficiency) as a solution is more often available. Because redundancy and reduced computation times are key for real-world applications, parallelization may be the key to safely and efficiently deploying optimization-based planners in practice.

\subsection{Selection of the Homotopy Class}\label{sec:discussion_homotopy}
The decision-making in Sec.~\ref{sec:decision_making} used the optimal costs of the local planners to decide which trajectory to execute and preferred the homotopy class of the last followed trajectory. We observed in practice that the robot stayed closer to the reference path and velocity, and passed pedestrians behind rather than in front when necessary. We additionally observed that due to measurement and prediction noise, making a more consistent decision led to better navigation. With high consistency, the robot switches behavior only if the new one is significantly better than the current one. In this way we can ensure a more robust estimation of the cost of the trajectory. While our proposed decision-making led to fast navigation, it ignored social norms. The decision-making could be made more socially compliant by learning to pick the homotopy class that humans take from data (see e.g.,~\cite{martinez-baselga_shine_2024, kretzschmar_socially_2016,rosmann_online_2017}). Finally, as noted by~\cite{rosmann_integrated_2017}, homotopy classes merge and split when obstacles are passed or appear in the planning horizon. Reacting to these events could make the planner more responsive in practice.%

\subsection{Limitations and Future Work}
One of the remaining limitations of the framework is the lack of interaction between the humans and the robot. The social forces model that we used in simulation is interactive but does not accurately model human-robot interactions. It may be possible to reduce the complexity of interaction with humans to an explicit decision on the topology class of interaction (along the lines of~\cite{mavrogiannis_multi-agent_2019}) that simplifies the planning problem. Additionally, T-MPC\texttt{++} can possibly be extended to $3$D navigation in dynamic environments, for instance using higher-dimensional H-signatures, and the guidance planner could be extended to incorporate non-Gaussian (i.e., multi-modal) uncertainty in obstacle motion (e.g.,~\cite{de_groot_scenario-based_2023}). %

\section{CONCLUSION}\label{sec:conclusion}
We presented in this paper a two-fold planning approach to address the inherent local optimality of optimization-based planners. Our planner consisted of a high-level global planner and a low-level optimization-based planner. By accounting for the topology classes of trajectories in the dynamic free space, we generated trajectories with distinct passing behaviors that we then used to guide several local optimization-based planners in parallel.

We simulated a mobile robot navigating among pedestrians and showed that the proposed guided planner resulted in faster and more consistent robot motion than existing planners, including a state-of-the-art topology-guided planner. We qualitatively observed the same improvement in the real world, where we navigated successfully among five pedestrians.

In future work, we aim to deploy the proposed method, considering uncertainty in obstacle motion, on a self-driving vehicle navigating in urban environments.

\begin{appendices}
\section{HOMOTOPY COMPARISON}\label{ap:homotopy_comparison}
This appendix details and compares three implementations of homotopy comparison function~\eqref{eq:homotopy_comparison} for $2$D motion planning in dynamic environments. %

\subsection{H-signature}\label{sec:homology}
The H-signature~\cite{bhattacharya_topological_2012} approximates homotopy classes by homology classes, formally defined as follows.
\begin{definition}
    \cite{bhattacharya_topological_2012}~(Homologous Trajectories) Two trajectories $\b{\tau}_1, \b{\tau}_2 \in T$ connecting the same start and end points $\b{x}_s$ and $\b{x}_g$ respectively, are homologous iff $\b{\tau}_1$ together with $\b{\tau}_2$ (the latter in the opposite direction) forms the complete boundary of a 2-dimensional manifold embedded in $\mathcal{X}$ not containing or intersecting any of the obstacles.
\end{definition}
\begin{figure}[t]
    \centering
    \includegraphics[width=0.35\textwidth]{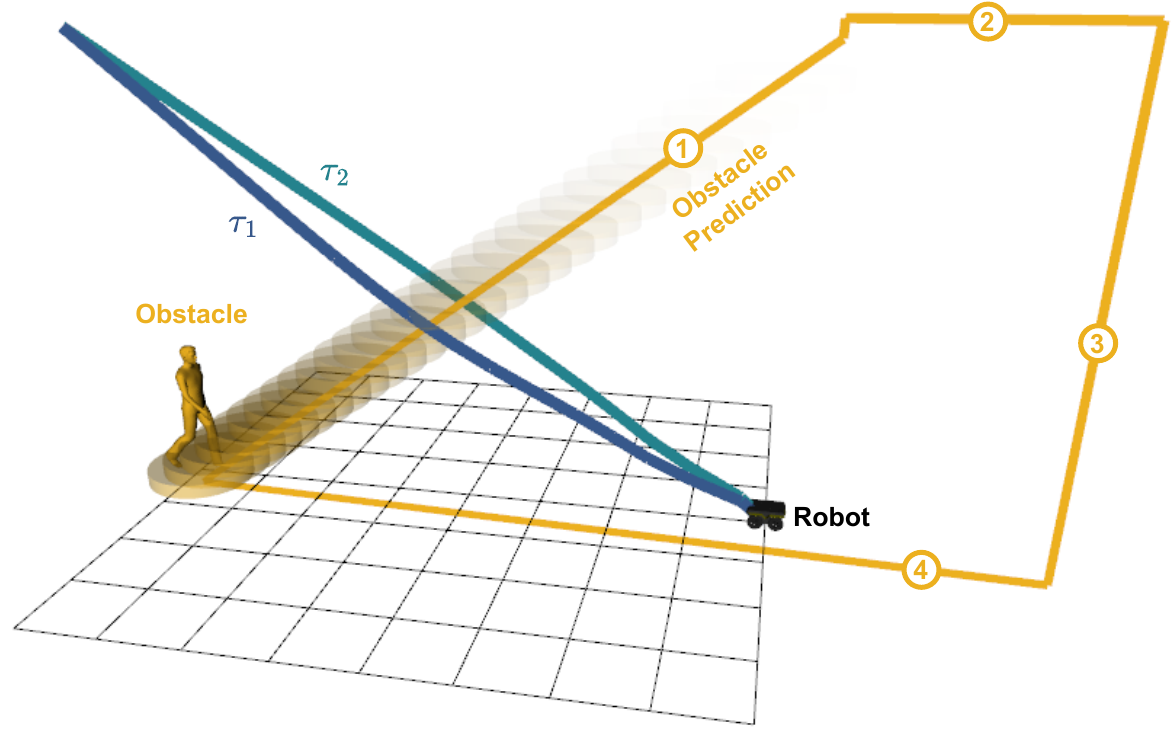}
    \caption{Illustrating example for the H-signature~\cite{bhattacharya_topological_2012}. The two trajectories $\b{\tau}_1$ (blue) and $\b{\tau}_2$ (green) are in distinct homology classes as the loop that they form contains obstacle skeleton (orange). In practice, link (3) is placed far away.} %
    \label{fig:h-signature}
\end{figure}%

If two trajectories are homotopic, they are homologous. The reverse does not hold. To compute the H-signature in the $3$D space composed of $x,y$-position and time, each obstacle and its prediction is virtually modeled as a current-carrying wire (see Fig.~\ref{fig:h-signature}). The H-signature of Obstacle $j$, $h^j(\b{\tau})$ is defined by the virtual magnetic field resulting from Obstacle $j$'s current loop, integrated over trajectory $\b{\tau}$. If two trajectories $\b{\tau}_1, \b{\tau}_2$ (see Fig.~\ref{fig:h-signature}) enclose the loop of obstacle $j$, then $h^j(\b{\tau}_1) \neq h^j(\b{\tau}_2)$. Hence, two trajectories are equivalent if $h^j(\b{\tau}_1) = h^j(\b{\tau}_2) \ \forall j$ and are distinct otherwise. 
We compute the H-signature for a finite-time state space, by constructing a $1$D skeleton of the obstacle prediction that is looped outside of the workspace and time horizon (see Fig.~\ref{fig:h-signature}). The skeleton is composed of the following lines. \circled{1} The obstacle's prediction for $0 < t < T$. \circled{2} A line upwards to $t = T+\epsilon$ where $\epsilon>0$ is a small constant and a line going outside of the workspace. \circled{3} A line down to $t = -\epsilon$. \circled{4} A line to the obstacle position at $t = 0$. This skeleton ensures that trajectories can only enclose the predicted obstacle motion for $0 < t < T$.

We assume that obstacle trajectories are piecewise linear (e.g., discrete-time trajectories). The integration of the magnetic field $\b{B}$ can then be computed analytically (see \cite{bhattacharya_topological_2012}) per segment $i$ of Obstacle $j$'s skeleton $\overline{\b{o}_i^j \b{o}_i^{j'}}$ as follows:
\begin{align*}
    &\b{p} = \b{o}_i^j - \b{r}, \ \b{p}' = \b{o}_i^{j'} - \b{r}, \ \b{d} = \frac{(\b{o}_i^{j'} - \b{o}_i^j) \times (\b{p}\times\b{p}')}{||\b{o}_i^{j'} - \b{o}_i^j||^2},\\
    &\b{\Phi}(\b{o}_i^j, \b{o}_i^{j'}, \b{r}) = \frac{1}{||\b{d}||^2}\left(\frac{\b{d}\times\b{p}'}{||\b{p}'||} - \frac{\b{d} \times \b{p}}{||\b{p}||}\right),\\
    &\b{B}(\b{r}) = \frac{1}{4\pi}\sum_{i = 0}^{I} \b{\Phi}(\b{o}_i^j, \b{o}_i^{j'}, \b{r}),
\end{align*}
with $I$ the number of segments in Obstacle $j$'s skeleton. The integral $\int_{\b{l}} \b{B}(\b{r})d\b{r}$ over looped robot trajectories $\b{l}$ yields $1$ if the obstacle is enclosed and $0$ otherwise. To compare trajectories that reach different goals in our guidance planner, we connect their end points directly at $t=T$ with an additional line. We use the GSL library~\cite{galassi_gnu_2019} to perform the integration and cache computed H-signature for each trajectory to prevent re-computation.

\subsection{Winding Number}
The winding number~\cite{berger_topological_2001} is a topological invariant that indicates how the robot and obstacle $j$ are rotated around each other. It is computed as follows. The relative position of obstacle $j$ to the robot for time step $k$ is $\b{d}_k^j = \b{p}_k - \b{o}_k^j$. The relative angle $\angle \theta_k^j$ is the angle of $\b{d}_k^j$ in a fixed global frame. Between time steps $k$ and $k+1$, the relative angle changes by $\Delta \theta_k^j = \theta_{k + 1}^j - \theta_k^j$. The winding number accumulates these changes over all time steps, $\lambda(\b{\tau}, \b{o}^j) = \frac{1}{2\pi}\sum_{k = 1}^{N} \Delta \theta_k^j.$
The sign of the winding number $\lambda$ indicates the passing direction, its magnitude denotes passing progress. We consider a trajectory to pass obstacle $j$ if $|\lambda^j| \geq \lambda_{\textrm{pass}}$, where by default $\lambda_{\textrm{pass}} = \frac{1}{4\pi}$. We consider two trajectories distinct if there exists at least one obstacle that the trajectories pass on different sides and consider them equivalent otherwise\footnote{Future work could also use winding numbers to distinguish between passing and non-passing trajectories.}. We cache computed winding numbers to prevent recomputation.

\subsection{Universal Visibility Deformation}
Universal Visibility Deformation (UVD) was proposed for static obstacle avoidance in $3$D and therefore does not exactly capture the local optima for collision avoidance in $2$D dynamic environments. Two trajectories are in the same UVD class if points along the trajectories can be connected, without intersecting with obstacles.

\begin{definition}{\normalfont{\cite{zhou_robust_2020}}}
    Two trajectories $\b{\tau}_1(s), \b{\tau}_2(s)$ parameterized by $s \in [0, 1]$ and satisfying $\b{\tau}_1(0) = \b{\tau}_2(0)$, $\b{\tau}_1(1) = \b{\tau}_2(1)$, belong to the same uniform visibility deformation class, if for all $s$, line $\overline{\b{\tau}_1(s)\b{\tau}_2(s)}$ is collision-free.
\end{definition}%
In practice, we check collisions for $s$ at discrete intervals along the trajectories.

\begin{table}[b]
\centering
\caption{Comparison between homotopy comparison implementations. Notation follows that of Table~\ref{tab:pedsim_cvpredictions}.}
\resizebox{0.49\textwidth}{!}{\begin{tabular}{|l|l|c|c|c|}
\hline\textbf{\#} & \textbf{Method} & \textbf{Dur. [s]} & \textbf{Safe (\%)} & \textbf{Homotopy Comparison Time [ms]} \\\hline
\multirow{3}{*}{\specialcell{Homotopy\\Comparison}} & Winding Angles & \sigs{21.2} (0.9) & \textbf{93} & \textbf{0.3} (0.7) \\
&H-Signature & 21.2 (1.0) & 92 & 2.1 (4.0) \\
&UVD & \textbf{21.1} (0.9) & 88 & 3.7 (8.5) \\\hline
\end{tabular}}
\label{tab:homotopy_comparison}
\end{table}

\subsection{Comparison}
We compare the homotopy comparison functions in simulation on the scenario of Sec.~\ref{sec:results_crowded} over $100$ experiments. Table~\ref{tab:homotopy_comparison} indicates that UVD degrades navigation performance, likely because it is not designed for dynamic environments and may lead to duplicate trajectories in practice. The H-signature and winding numbers show similar navigation performance. Winding numbers are computationally more efficient but require a minimum passing angle to be tuned. Since the H-signature generalizes to higher dimensions and both methods are still real-time, we opted to use the H-signature in this paper.

\end{appendices}

\bibliographystyle{IEEEtran}
\bibliography{references}

\begin{IEEEbiography}[{\includegraphics[width=1in,height=1.25in,clip,keepaspectratio]{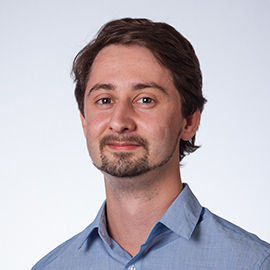}}]{Oscar de Groot} received both the B.Sc. degree in electrical engineering, in 2016, and the M.Sc. degree in systems \& control, in 2019, from the Delft University of Technology, Delft, The Netherlands. He is currently pursuing a Ph.D. in motion planning for autonomous vehicles in urban environments at the department of Cognitive Robotics at the Delft University of Technology. His research interests include probabilistic safe motion planning, scenario optimization, model predictive control and self-driving vehicles.
\end{IEEEbiography}
\begin{IEEEbiography}[{\includegraphics[width=1in,height=1.25in,clip,keepaspectratio]{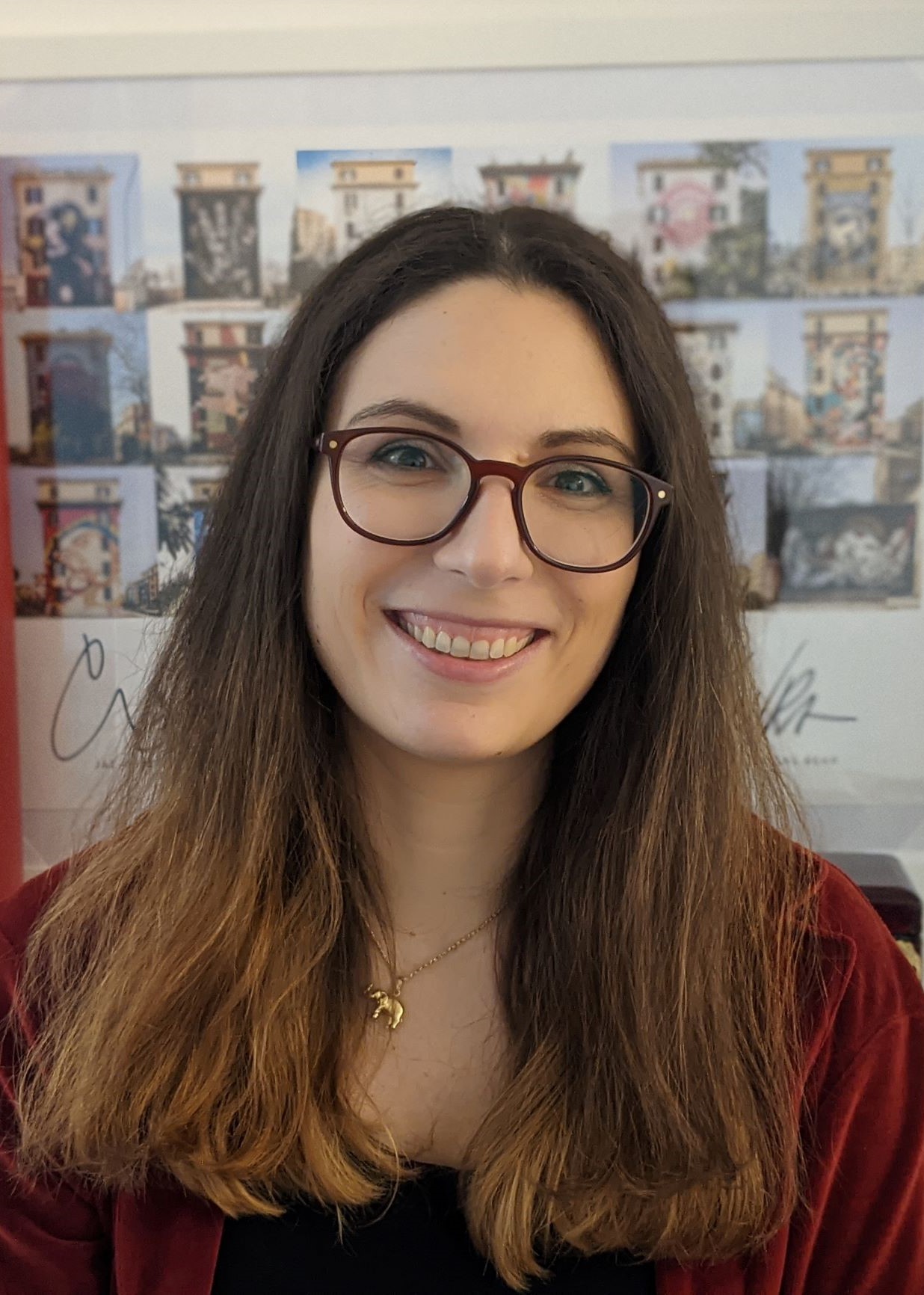}}]{Laura Ferranti} received her PhD from Delft University of Technology, Delft, The Netherlands, in
2017. She is currently an assistant professor in the Cognitive
Robotics (CoR) Department, Delft University of Technology, Delft, The
Netherlands. She is the recipient of an NWO Veni Grant from The
Netherlands Organisation for Scientific Research (2020), and of the Best Paper
Award in Multi-robot Systems at ICRA 2019.
Her research interests include optimization and optimal control,
model predictive control, reinforcement learning, embedded optimization-based
control with application in flight control, maritime transportation, robotics, and automotive.   
\end{IEEEbiography}
\begin{IEEEbiography}[{\includegraphics[width=1in,height=1.25in,clip,keepaspectratio]{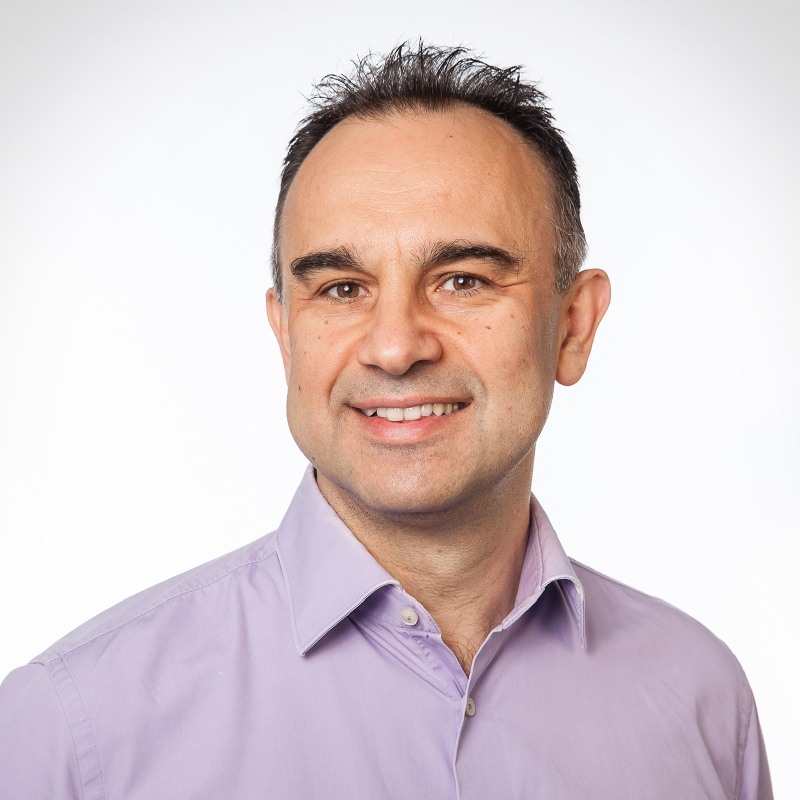}}]{Dariu Gavrila}
 received the Ph.D. degree in computer science from Univ. of Maryland at College Park, USA, in 1996. From 1997 until 2016, he was
with Daimler R\&D, Ulm, Germany, where he became a Distinguished Scientist. He led the vision-based pedestrian detection research, which was commercialized 2013-2014 in various Mercedes-Benz models. In 2016, he moved to TU Delft, where he since heads the Intelligent Vehicles group as a Full Professor. His current research deals with sensor-based detection of humans and analysis of behavior in the context of self-driving vehicles. He was awarded the Outstanding Application Award 2014 and the Outstanding Researcher Award 2019, both from the IEEE Intelligent Transportation Systems Society.
\end{IEEEbiography}
\begin{IEEEbiography}[{\includegraphics[width=1in,height=1.25in,clip,keepaspectratio]{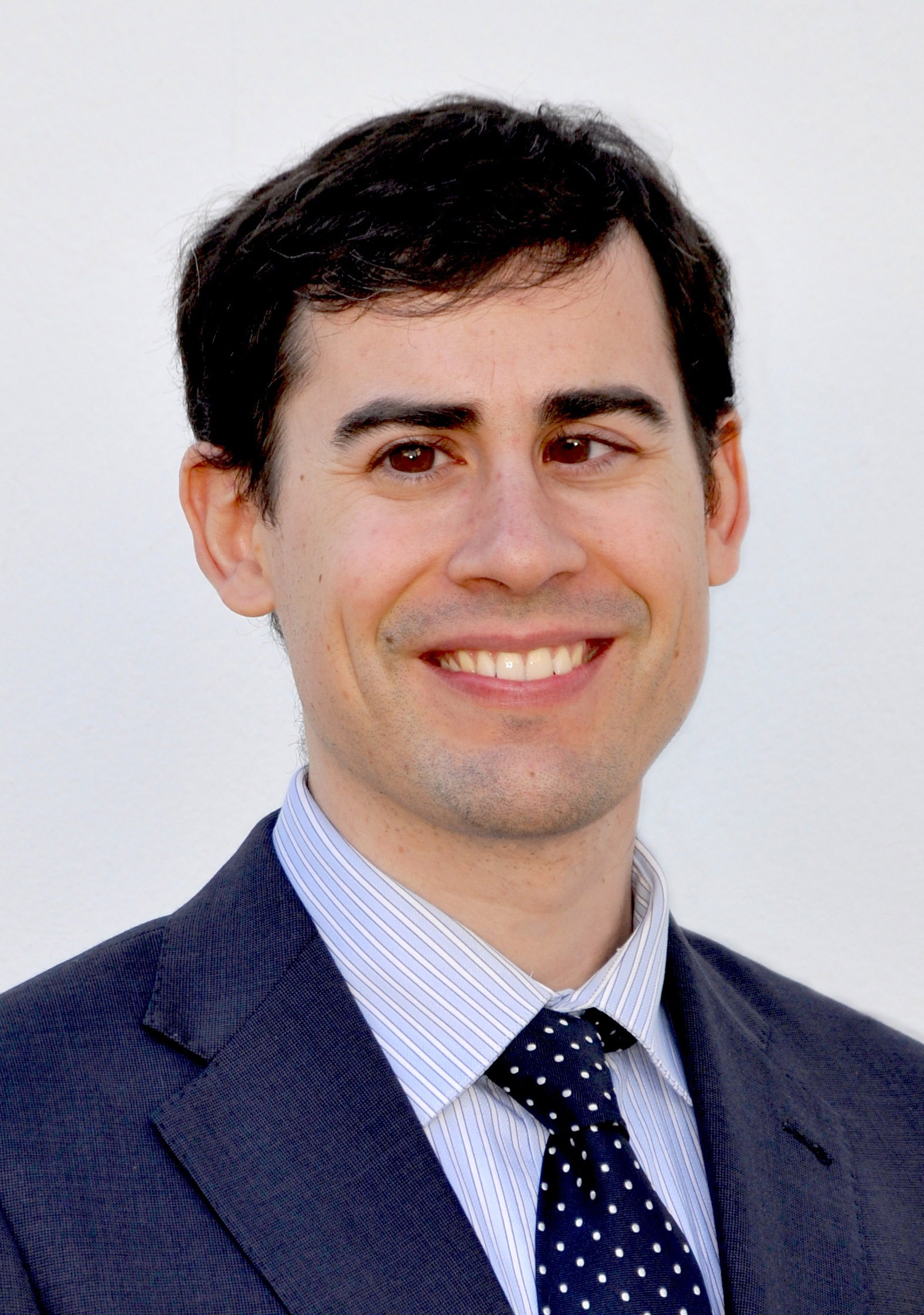}}]{Javier Alonso-Mora}
 is an Associate Professor at the Cognitive Robotics department of the Delft University of Technology, and a Principal Investigator at the Amsterdam Institute for Advanced Metropolitan Solutions (AMS Institute).
Before joining TU Delft, Dr. Alonso-Mora was a Postdoctoral Associate at the Massachusetts Institute of Technology (MIT). He received his Ph.D. degree in robotics from ETH Zurich,

His main research interest is in navigation, motion planning and control of autonomous mobile robots, with a special emphasis on multi-robot systems, on-demand transportation and robots that interact with other robots and humans in dynamic and uncertain environments. He is the recipient of an ERC Starting Grant (2021), the ICRA Best Paper Award on Multi-Robot Systems (2019), an Amazon Research Award (2019) and a talent scheme VENI Grant from the Netherlands Organisation for Scientific Research (2017).
\end{IEEEbiography}

\end{document}